\patchcmd{\l@chapter}{\bfseries}{}{}{}% \patchcmd{<cmd>}{<search>}{<replace>}{<success>}{<failure>}
\newcommand {\mm}[1] {\ifmmode{#1}\else{\mbox{\(#1\)}}\fi}
\def\mc{\mathcal}
\def\mb{\mathbb}
\def\rar{\rightarrow}
\renewcommand{\PrintDOI}[1]{\href{http://dx.doi.org/\detokenize{#1}}{doi: \detokenize{#1}}%
	\IfEmptyBibField{pages}{, (to appear in print)}{}}
\theoremstyle{definition}
\newtheorem{theorem}{Theorem}[section]
\newtheorem{lemma}[theorem]{Lemma}
\theoremstyle{definition}
\newtheorem{definition}[theorem]{Definition}
\theoremstyle{remark}
\numberwithin{equation}{section}
\numberwithin{equation}{section}
\newcommand{\R}{\mathbb{R}}  % The real numbers.
\title{Graph Based Analysis for Gene Segment Organization In a Scrambled Genome}
\author[1]{Mustafa Hajij}
\address[1]{Department of Computer Science and Engineering, University of South Florida, Tampa, FL 33612, USA}
\author[2]{Nata\v{s}a Jonoska}
\author[2]{ 
Denys Kukushkin}
\author[2]{Masahico Saito}
\address[2]{Department of Mathematics and Statistics, University of South Florida, Tampa, FL 33612, USA}
\date{}
\thanks{NJ was partially supported by National Science Foundation  CCF-1526485 and National Institutes of Health  R01GM109459. MS was partially supported by  National Institutes of Health  R01GM109459.
}
\begin{document}

%	\maketitle 
	
\begin{abstract}
 DNA rearrangement processes recombine gene segments that are organized on the chromosome in a variety of ways. The segments can overlap, interleave or one may be a subsegment of another. We use directed graphs to represent segment organizations on a given locus where contigs containing rearranged segments represent vertices and the edges correspond to the segment relationships. Using graph properties we associate a point in a higher dimensional Euclidean space to each graph such that 
cluster formations and analysis can be performed with methods from topological data analysis.
The method is applied to a recently sequenced model organism \textit{Oxytricha trifallax}, a species of ciliate 
with highly scrambled genome that undergoes  massive rearrangement process after conjugation. The analysis shows some emerging star-like graph structures indicating that segments of 
a single gene can interleave, %with,
or even contain  all of the segments from fifteen or more other genes in between its segments. We also observe that as many as six genes can have their segments mutually interleaving or  overlapping.  
\end{abstract}

\maketitle

%\tableofcontents

%\maketitle

%\bigskip

\section{Introduction}
\label{sec:intro}

It has long been observed that genome rearrangement processes on an evolutionary scale can lead to speciation~\cite{dobzhansky1933sterility}, while on developmental scale they often involve DNA 
deletions~\cite{beermann1977diminution,shibata2012extrachromosomal} as well as wholescale programmed rearrangements~\cite{smith2012genetic,prescott1994dna}. 
For example, the highly diverse collection of antibodies often is attributed to somatic DNA 
recombination~\cite{tonegawa1983somatic}, and  rearrangements on a chromosomal levels can be observed during homologous recombination~\cite{rieseberg2001chromosomal}.
In recent years there are numerous observations of alternative splicing where rearranging patterns of exons and introns of a single gene can produce 
different protein variants from a single mRNA (e.g.~\cite{haussmann2016m6a}).  Rearranging segments of nucleotide sequences can be organized 
in a variety of arrangements on the locus, for example, they can be  overlapping or interleaving~\cite{g3}.
{\it Oxytricha trifallax} is a single cell organism that is often taken as a model organism to study DNA rearrangement processes. 
This, 
and similar species of ciliates undergo massive restructuring of a germline micronuclear DNA  during development of a somatic macronucleous. Recent sequencing and annotation of the whole {\it O. trifallax} 
genome allow genome level studies~\cite{chen2014architecture,burns2015database}. Scrambling patterns within  thousands of
 genes were observed revealing hidden structures among the 
scrambled gene/nanochromosome segments that explain over 95\% of the scrambled 
genome~\cite{burns2016recurring}. While those studies were focused on scrambled recurrent patterns  within 
a single gene, 
in this paper we study inter-gene segment arrangements. Through clustering techniques we identify patterns in which segments of different genes interleave or overlap throughout the genome.
We  represent a micronuclear locus with the interleaving and overlapping gene segments by a directed graph. 
Such obtained graph data is then converted to a set of points (point cloud) in a Euclidean space and
%by listing graph features (properties) to 
 we apply topological data analysis techniques to obtain clusters of similar graphs. This 
method was  applied to the whole genome data of \textit{Oxytricha trifallax}~\cite{chen2014architecture,burns2015database}.

In the last decade, Topological Data Analysis (TDA) has shown to be another tool for data analysis and data mining  %techniques 
 that can be used to extract topological information from various types of data~\cite{carlsson2009topology}. TDA originated from computational topology with ideas inspired from statistics and computer science.  One of the notable tools in TDA is \textit{persistence homology}. Persistence homology, or for brevity PH, was introduced by Edelsbrunner et al.~\cite{edelsbrunner2000topological} and later studied further by Zomorodian and Carlsson \cite{zomorodian2005computing}. The theory has been studied extensively since then and many theoretical advances have been made 
 %since it was introduced
\cite{carlsson2010zigzag,carlsson2009zigzag,chazal2013persistence}. Persistence Homology is defined in all dimensions $\ge 0$, but clustering analysis, as used in this paper, uses only dimension $0$.  More details on TDA and persistence homology can be found in \cite{carlsson2009topology,edelsbrunner2008persistent,ghrist2008barcodes}. 

 Due to advances in bioscience and biotechnology, the growth of biomolecular data has exploded and many data analysis algorithms have been developed aiming to better understand the generated 
 data~\cite{cheng2006dompro,fernandez2014improving,meinicke2015uproc}. Data analysis using topological methods has proven to be useful in showing 
general patterns that were difficult to  observe with other techniques. TDA methods have been used recently in many  applications including protein structure identification \cite{gameiro2015topological,xia2014persistent}, aggregation models for animal behavior \cite{ballerini2008interaction}, and fullerene stability \cite{xia2015persistent}.

%\subsection{Point Cloud Data Construction}

%The TDA analysis 
%starts by converting a set of points in an Eucledian space, called 
%{\it point cloud data}  into a nested sequence of simplicial complexes called a \textit{filtration} and then performing persistent homology computation on this filtration. 

Our data set consists of directed graphs $\mathcal{G}=\{G_1,\ldots,G_n\}$ which we convert 
to a set of points in a Euclidean space, point cloud. 
$S=\{P(G_1),\ldots,P(G_n)\}\subset \mathbb{R}^n$. For the clustering analysis we applied TDA with PH at dimension zero to the obtained point cloud in $\mb R^n$.
 This process is described 
 in Section~\ref{methods}.

%\begin{figure}[H]
%  \centering
%   {\includegraphics[scale=0.04]{pipline}
%   \caption{Three steps of abstracting the dataset. Left : a set of graphs obtained from the data.
   %that represents interaction types of  contigs. 
%   Middle : the graphs are associated with  points in a Euclidean space (point cloud). Right : we construct a complex filtration on the point cloud and extract  clusters at every level.}
%   \label{pipline}
%  }
%\end{figure}

Although this paper is focused on analysis of a 
%our main aim is to study 
specific data of interleaving/overlapping gene segments, the method that we propose for converting a graph data to a point cloud data % to apply TDA,  
is novel and general,  and can be applied to analyze similarities in 
 various graph data. 

%In our paper we utilize the zero-dimensional persistence homology. In this case PH can be considered a hierarchical clustering procedure on the underlying space. 

\section{Method and Data Construction}
\label{methods}

\subsection{Gene Segment Maps in \textit{Oxytricha trifallax}}

\textit{Oxytricha trifallax} is a species of ciliate used as a model organism to study genome rearrangements. It undergoes massive genome rearrangements during the development of a somatic macronucleus (MAC) specializing in gene expression from an archival germline micronucleus (MIC)~\cite{chen2014architecture}. 
Within this process, over 16,000 macronuclear 
nanochromosomes assemble through DNA processing events involving global deletion 
of 90-95\% of the germline DNA, effectively eliminating nearly all so-called 
``junk'' DNA, including intervening DNA segments (internally eliminated 
sequences, IESs).  Because these IES segments interrupt the coding regions of the precursor macronuclear gene loci in the 
micronucleus, 
%Because IESs interrupt coding regions, 
%Therefore, 
each macronuclear gene 
may appear as several nonconsecutive segments (macronuclear destined 
sequences, MDSs) in the micronucleus.  Moreover, the precursor order of 
these MDS segments for thousands of genes can be permuted or inverted in the micronucleus such that
during the macronuclear development, all IESs  are  
deleted and the MDSs are rearranged to form thousands of gene-sized chromosomes. 

In \cite{chen2014architecture} and later in \cite{g3}, it was 
observed that an IES between consecutive MDSs of one gene can contain
MDS segments from other genes, and that this process can be nested. Furthermore MDSs from different MAC genes can overlap or one MDS can be a subsegment of an MDS of another gene. % Fig.~\ref{precursorProduct}). 
\begin{figure}[h!]
\begin{center}
\includegraphics[scale=.25]{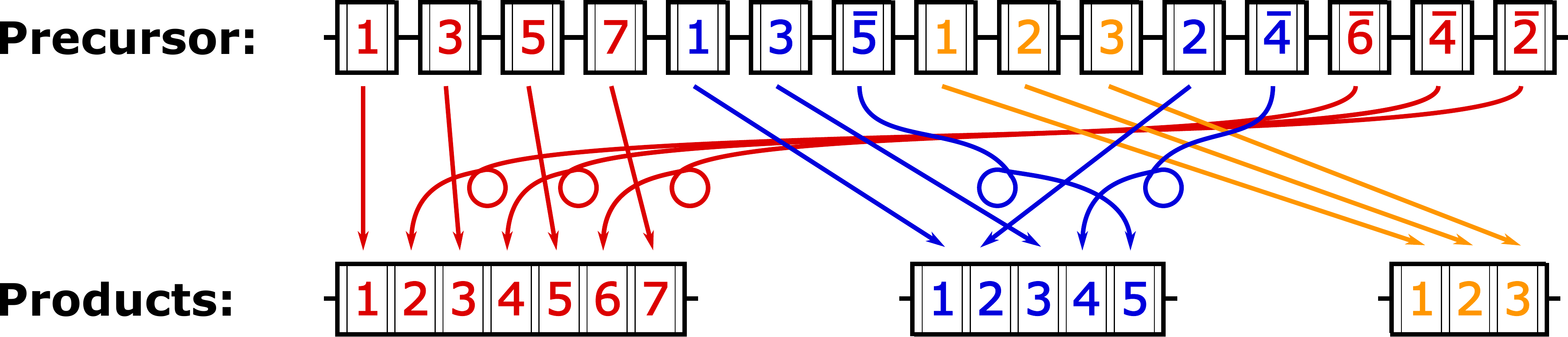}
\caption{Rearrangement of gene segments in {\it Oxytricha trifallax}}
\label{precursorProduct}
\end{center}
\end{figure}

The interleaving situation is schematically depicted in Figure~\ref{precursorProduct}.
%In the figure,  a MIC contig containing MDSs of three MAC contigs, each indicated with a different color, is depicted.
MDSs are
represented by colored boxes with numbers. 
 This example illustrates a MIC contig that 
has MDSs of three MAC contigs, each indicated with a different color. The numbers within the boxes indicate the order of the MDSs in the corresponding MAC contig. The barred numbers 
indicate MDSs in a reverse orientation 
(inverted) in the MIC contig relative to the ordering of the MDSs in the corresponding MAC contig. 
%This 
The thin
black lines between the squares %representing MDSs 
indicate IESs.

%http://trifallax.princeton.edu/cms/raw-data/genome/mic/Oxytricha_trifallax_micronuclear_genome_MDS_IES_maps.gff

%\url{http://trifallax.princeton.edu/cms/raw-data/genome/mic/Oxytricha_trifallax_micronuclear_genome_MDS_IES_maps.gff}

The  MDS interrelationship  analyzed in this paper uses  the genome sequencing data in \cite{chen2014architecture}, and can be downloaded  
from the Supplemental Information %(S1) 
in ~\cite{chen2014architecture} and also in \cite{burns2016recurring}.
The
%se situations are schematically depicted in 
 data used for analysis in this paper is the processed data reported in \cite{burns2016recurring}. This data  was
filtered so that consecutive MDSs of a single MAC
contig that overlap or have no nucleotide gap (are adjacent) in the MIC
are merged into a singe MDS (correcting  to possible previous annotation artifacts). In addition, we excluded MAC contigs that contain  
 segments that are distant in the MAC contig but overlap in a
MIC contig, as well as  MAC contigs that are  alternative fragmentations
of longer MAC contigs. Both of these cases could be considered artifacts of the MIC-MAC maps although we cannot exclude the possibility that these cases are genuine to the data. We refer to such processed data as data $\mc D$ available at \url{http://knot.math.usf.edu/data/scrambled_patterns/processed_annotation_of_oxy_tri.gff}.

%In summary, the following situations in nuclei of {\it Oxytricha Trifallax}.
%\begin{itemize}
%\item
%In the conjugation process, micronucleus (MIC) contig  is rearranged to form a macronucleus (MAC).

%\item
%Divided segments of MAC genes are located in MIC and  called MAC destined sequences (MDSs). They are separated by ``junk'' DNA called internally eliminated sequences
%(IESs). 

%\item 
%An MDS from a  MAC genes may appear in MIC  in between two MDSs of a different MAC gene ({\it interleaving} MDSs), and it can overlap with another MDS of a different gene
%({\it overlapping} MDSs).
%\end{itemize}

\subsection{Graphs Corresponding to MIC Contings}
As MDS from different MAC contigs can overlap or interleave in a MIC contig 
we define the following types of relationships between MAC contigs 
%The following types of  MDS relationships among
%MAC contigs  
located on a single MIC contig.
%have been observed.
%
\begin{figure}[h!]
\begin{center}
\includegraphics[scale=.30]{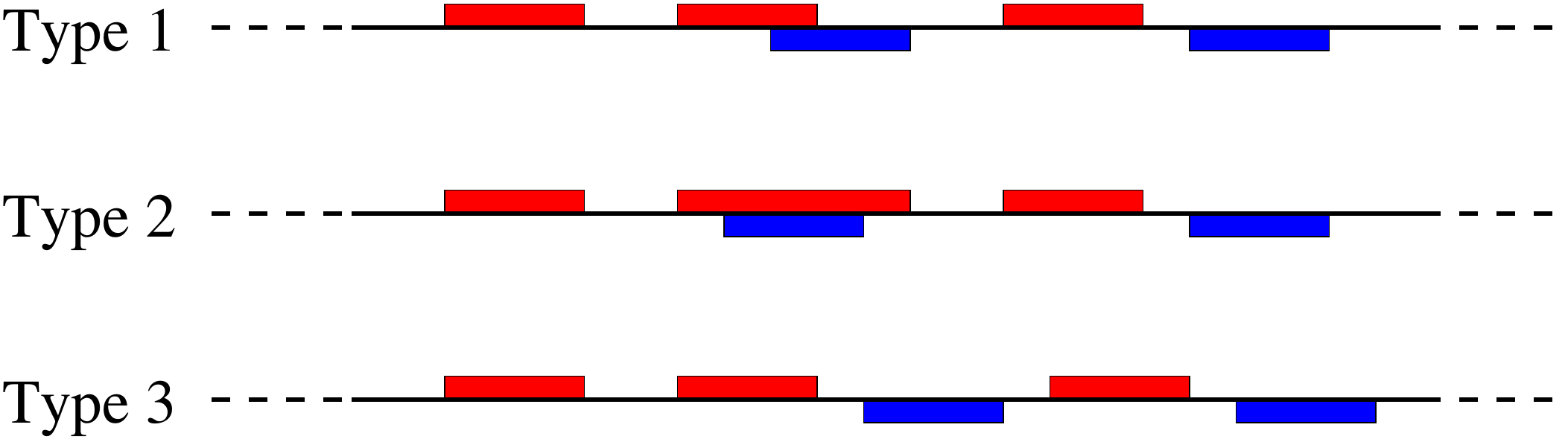}
\caption{Three types of MDS segment organization of different MAC contigs. MDSs of the same MAC contig are colored the same.}
\label{types}
\end{center}
\end{figure}

\begin{itemize}
\item (Type 1 : Overlapping)
If an MDS of a MAC contig $g_1$ overlaps with an MDS of
another, distinct MAC contig $g_2$, then it is said that $g_1$ and $g_2$ {\it overlap}, or 
they are {\it overlapping}. We also say that $g_1$ has type 1 interaction  with $g_2$,
or $g_1$ has interaction of type 1 with $g_2$. 

Two MAC contigs are considered to be overlapping if they have at least one pair of MDSs that overlap with at least 20bp in common. This is because two consecutive MDSs of the same MAC contig usually share sequences 
at their ends (pointers) that guide the rearrangement process~\cite{prescott1994dna}, and two MDSs from distinct MAC contigs can share the same pointer sequence. The length of these pointer sequences usually ranges between 2 to 20 nucleotides. 

%This number 20 bps was chosen in consultation with \cite{Laura}.

The overlapping relation is symmetric,  if $g_1$ overlaps with $g_2$, then  $g_2$ overlaps with $g_1$. 
The situation is depicted in  Figure~\ref{types} (Type 1).
In the figure, MDSs of $g_1$ and $g_2$ are represented by blue and red rectangles, respectively.
This case excludes the  case when one MDS is completely included in another, even though being a subsequence is a particular type of ``overlapping''. Such situation is included in Type 2 case (below).

\item (Type 2 : Containment) 
If an MDS of a MAC contig $g_1$ is contained  in (is a subsegment of) an MDS of
another distinct MAC contig $g_2$, then it is said that an MDS of $g_1$  
is contained in an MDS of $g_2$, and we say  $g_1$ has type 2 interaction with $g_2$. 

For this interaction when an MDS $M$ of $g_2$ contains an MDS $M^{\prime}$ of $g_1$, we require that both ends of $M$ have at least 5 bps that are not in common with $M^{\prime}$. That is, we require a complete inclusion such that there are no pointer sequences in common. In Figure~\ref{types} (Type 2), MDSs of $g_1$ are depicted in blue, and those for $g_2$ 
in red. This relation is not symmetric.
We distinguish this situation from the one in Type 1 because the unscrambling  of an MDS that is next to at least one IES (Type 1) may use a different  biological process involving Piwi-interacting RNA~\cite{Fang} rather than one that does not neighbor an IES ($g_1$ in Type 2).

\item (Type 3 : Interleaving)
If an IES of a MAC contig $g_1$ 
contains % is contained in %%%% Change definition according to Fig. 4.
an MDS of
another, distinct MAC contig $g_2$, then it is said that an MDS of $g_1$  
interleaves (or is interleaving) with %in 
$g_2$, or $g_1$ has Type 3 interaction with $g_2$. 
We allow that the ends of an interleaving MDS of $g_1$ and the MDSs of $g_2$ to intersect (overlap) up to (including) 5 bases. This requirement distinguishes type 3 case from the `overlapping' case where the requirement is at least 20 bases. 

\end{itemize}

We consider pairs $(g_1,g_2)$ of MAC contigs $g_1$ and $g_2$ that belong to the same MIC contig. To each pair of MAC contigs $(g_1,g_2)$ we associate a triple $c(g_1,g_2)=(b_1,b_2,b_3)$ where each entry $b_i$ ($i=1,2,3$) indicates whether $g_1$ is in relationship of Type $i$ with $g_2$. The value of $b_i$ is either $0$ (there is no relationship of Type $i$) or $1$ ($g_1$ is related to $g_2$ of with Type $i$). 

To investigate the situations of  these three types of interactions of  MDSs, we 
associate a directed graph %$G=(V, E)$
with labeled edges to each MIC contig in data $\mathcal D$ as follows.

 Each graph $G=G_M=(V(G_M), E(G_M))$, which may be disconnected and have multiple connected components, corresponds to a MIC contig $M$.
 Each vertex $g\in V(G_M)$ corresponds to a MAC contig $g$ whose MDSs are segments of the MIC contig $M$. 

A labeled directed edge $(g_1,c,g_2)$ is in  $E(G_M)$ if $c=c(g_1,g_2) \not = (0,0,0)$. 
%$g_1$ interacts $g_2$  by a combination of types described below,
%where $c$ is the color specified. 

In the figures below we use colors on the edges to indicate the labels of the edges:
red$=(1,1,1)$, green$=(1,1,0)$, blue$=(1,0,1)$,
orange$=(0,1,1)$, purple$=(1,0,0)$, cyan$=(0,1,0)$, and black$=(0,0,1)$.

%\begin{itemize}
%\item (Red) All three types.
%
%\item (Green) Types 1 and 2 but not 3.
%
%\item (Blue) Types 1 and 3 but not 2.
%
%\item (Orange) Types 2 and 3 but not 1.
%
%\item (Purple) Type 1 only.
%
%\item (Cyan) Type 2 only.
%
%\item (Black) Type 3 only.
%
%\end{itemize}

\begin{figure}[h!]
\begin{center}
\includegraphics[scale=.2]{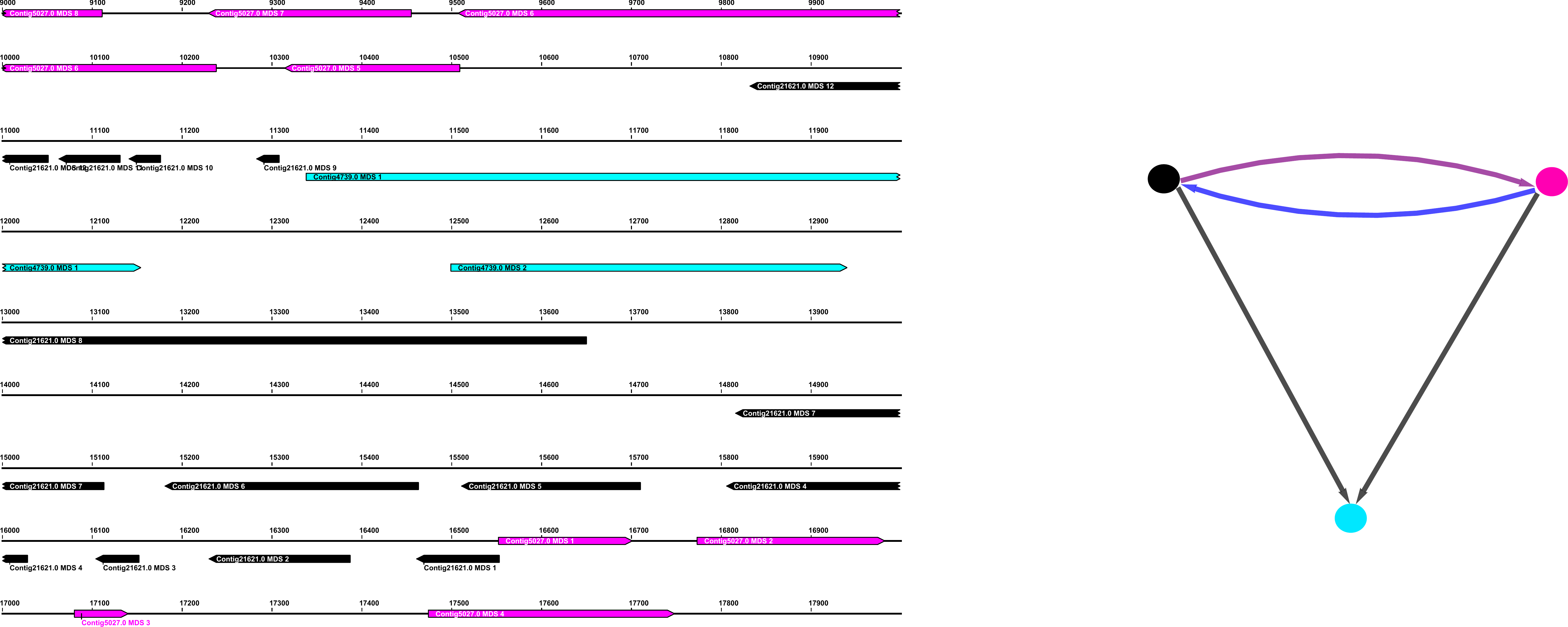}
 \put (-62,109) {\small{$(1,0,0)$}}
 \put (-62,82) {\small{$(1,0,1)$}}
 \put (-102,55) {\small{$(0,0,1)$}}
 \put (-25,55) {\small{$(0,0,1)$}}
\caption{A segment of MIC contig ctg7180000069854 (left) and its corresponding graph (right). There is a relatively short overlap of MDSs of the black  and purple contigs
and the purple contig interleaves with the  black (hence there is a blue edge indicating label $(1,0,1)$ from the purple to the black vertex) but there is no MDS segment of the 
black % purple 
contig 
that interleaves with the %interleaving with 
purple contig, so there is a purple edge 
indicating $(1,0,0)$ in the opposite direction. IESs of both black and purple contigs contain MDSs of the cyan contig, so there are two black edges indicating labels $(0,0,1)$ ending at the cyan vertex.}
\label{segment}
\end{center}
\end{figure}

Figure~\ref{segment}
shows a locus of the MIC contig 
ctg7180000069854 containing MDSs of three MAC contigs 5027.0 (purple), 
21621.0 (black), and 
4739.0 (cyan). 
Figures~\ref{ctg7180000088928} 
and~\ref{ctg7180000067411} depict some 
other  examples of graphs 
for  MIC contigs in the data. 

%In \cite{chen2014architecture}, the whole genes of {\it Oxytricha trifallax} was obtained.
%The database \cite{Burns2016} contains this information.
%This data was further processed in \cite{burns2016recurring} to obtain a data ${\mathcal D}$, that contains the loci of MDS segments in MIC contig for each MAC contig. This data was used to construct a graph for each MIC contig as described above.

The set of graphs corresponding to the data $\mathcal D$ that is analyzed here is denoted $\mc G_{\mc D}$ such that 
$\mc{G}_{\mc D}=\{ G_M\mid M$ is a MIC contig in $\mc D\,\}$.
There are 629 distinct colored graph isomorphism classes and 288 isomorphism classes of colored connected components of the graphs in $\mc D$, and they can be found at:
\url{http://knot.math.usf.edu/data/Colored_Components/index.html}.
%There are 283 distinct vectors is set $S$ obtained with this the above adjustment.

\subsection{Graph Features Selection}\label{subsec:graphfeatures}

We describe a method of converting graph data set to a set of points in the Euclidean space $\mathbb{R}^n$, i.e., the point cloud.

To each (directed and colored) graph $G$ in our data set we associate a vector $P(G) \in \mathbb{R}^n$ obtained by using relevant numerical graph invariants. This vector is then considered as a point in $\mathbb{R}^n$ (Figure~\ref{coordinate}).

\begin{figure}[h]
  \centering
   {\includegraphics[scale=0.05]{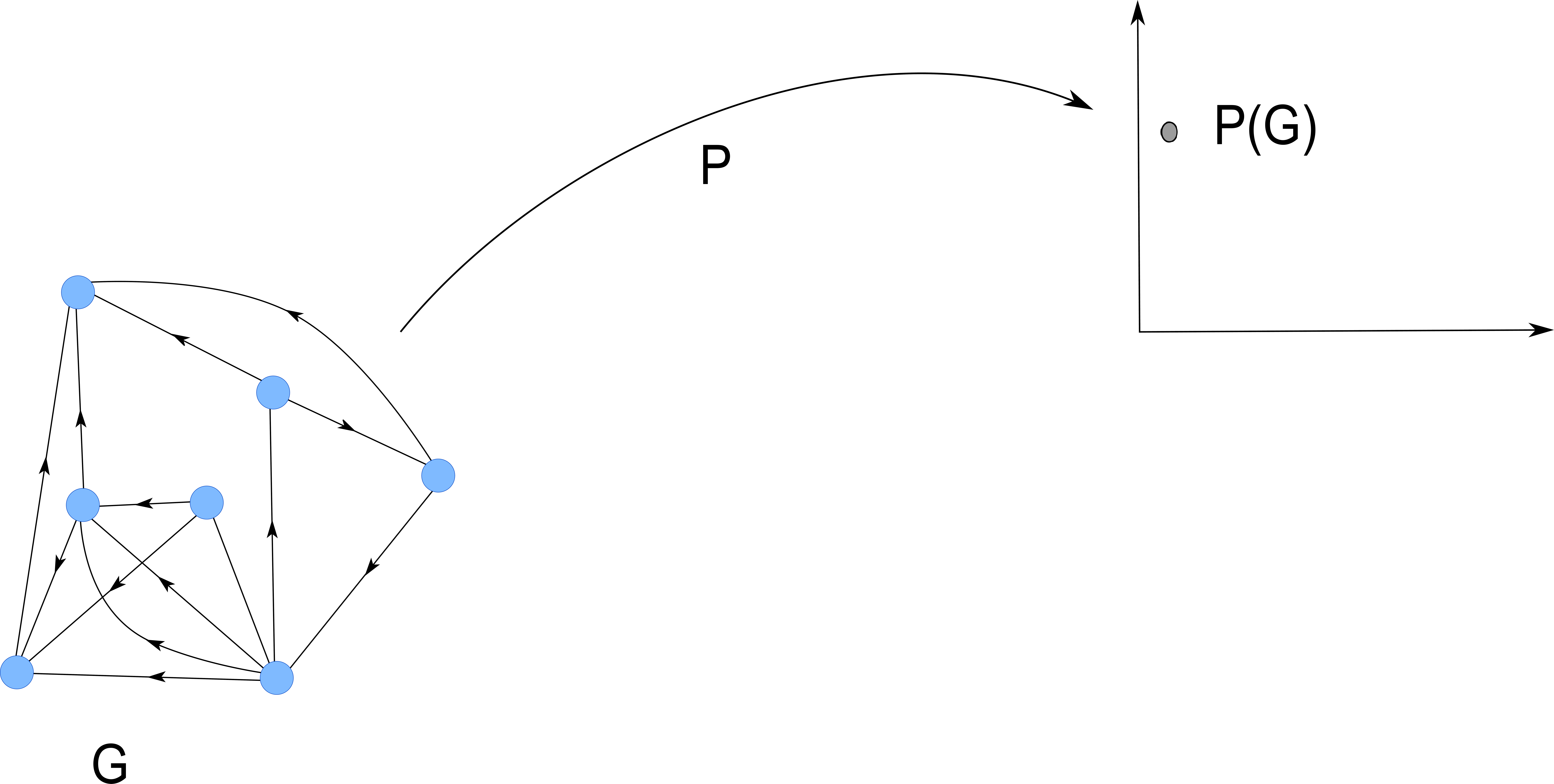}
     \put (-10,120) {$\mathbb{R}^n$}
   \caption{Every graph $G$ is associated to a feature vector $P(G)$, a point in the Euclidean space $\mathbb{R}^n$.}
   \label{coordinate}
  }
\end{figure}

The vector $P(G)$ is  
obtained by using local, vertex specific, and global, graph specific, features of $G$. In this first global analysis of the genome we cluster the data according to general graph structure properties, therefore for each $G \in \mathcal{G}_{\mathcal{D}}$ we also consider a corresponding undirected graph $U(G)$. The undirected, uncolored graph $U(G)$ is obtained from $G$ by replacing each pair of parallel edges with opposite directions in $G$ with an undirected edge, and 
by ignoring the direction and the colors of the edges as shown in Figure~\ref{example}.
\begin{figure}[h]
  \centering
   {\includegraphics[scale=0.07]{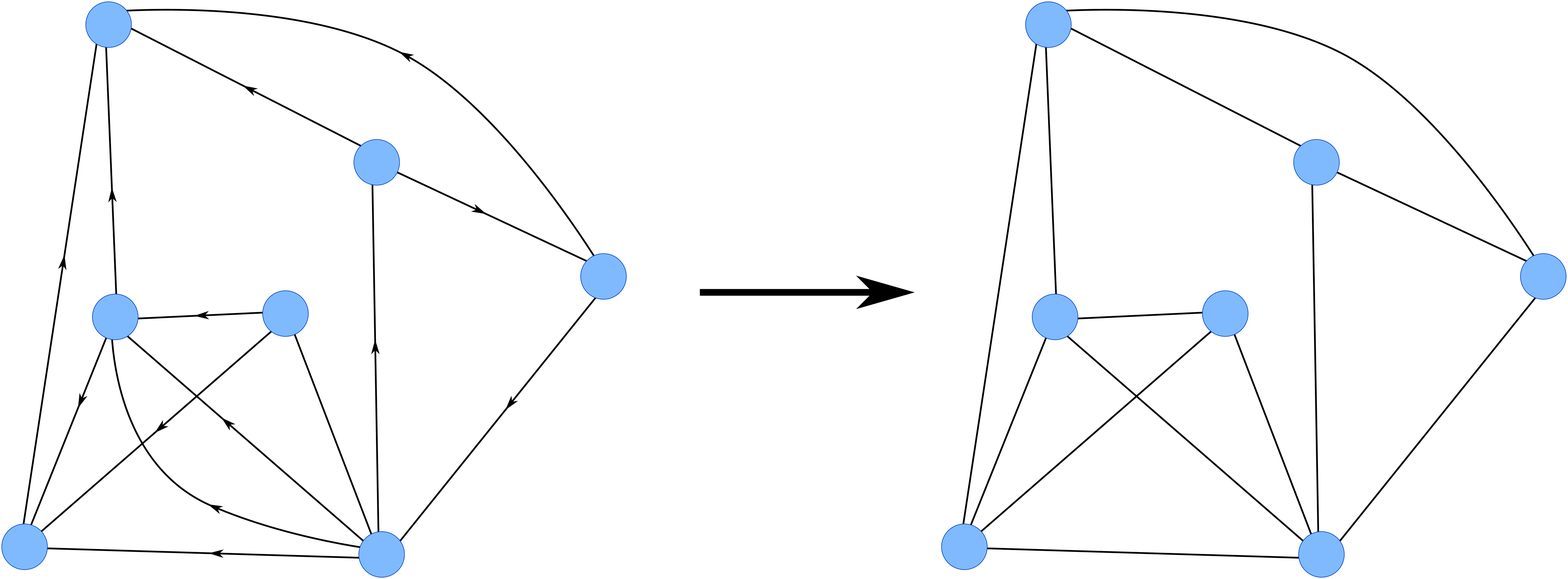}
   \caption{An undirected graph associated to a directed one.}
   \label{example}
  }
\end{figure}

\noindent {\bf Global Vector.} A vector $P_{gl}(G)$ with three entries, called the {\em global vector}, is associated to each graph $G \in \mathcal{G}_{\mathcal{D}}$. This vector $P_{gl}(G)$
consists of three features $\langle |V(G)|,|E(G)|,CN(G) \rangle$ where  $|V(G)|$ and $|E(G)|$ are the numbers of vertices and edges in $G$, respectively,  and 
$CN(G)$ is the size of the largest clique in $U(G)$. The isolated vertices are not counted in $|V(G)|$ as they represent MAC contigs that have no  interrelation with any other MAC contig present in the MIC contig represented by the graph. In our data the maximum number of vertices is 43, the maximum number of edges is 74, and the largest clique size is 6 (appears twice in the data).

\noindent {\bf Local Vector.} Vectors that use local properties of the vertices are associated to each $G\in \mathcal{G}_{\mathcal{D}} $. %These vectors use the properties of the vertices that appear locally in the graph. 
 For each vertex $v_i$ we consider two numbers,  its valency $val(v_i)$, and the clique number $cq(v_i)$. 
 The valency $val(v_i)$ is a summation of its out-degree and its in-degree 
(including the parallel oppositely oriented edges) and the number of cliques of given sizes. 
The clique number $cq(v_i)$ is the number of cliques (induced subgraphs of $U(G)$ isomorphic to the complete graph $K_k$ for some $k$) that contain this vertex. 

The vertices in $G$, $v_1,v_2,\ldots, v_{|V(G)|}$ are ordered such that their valences are non-increasing. Vertices that have the same valency are further ordered such that their clique numbers are non-increasing. 
This order remains fixed for the graph $G$.
The \textit{valency vector}, denoted $P_{val}(G)$, consists of a list of
valencies of the preordered vertices $P_{val}(G)=$$\langle {\rm valence}(v_i)\rangle_{v_i \in V(G)}$ of the graph $G$. The maximum valency of a vertex in our data is $29$ achieved by contig 67157 with 25 outgoing edges and 4 incoming edges. The 23 of the outgoing edges have $(0,0,1)$, one edge has label $(0,1,0)$ and the other is labeled $(0,1,1)$. The maximum outgoing valency is 25 and it is achieved by the contig 67157. The maximum incoming valency is 6 and it is achieved by contig 67223.

%in a non-increasing  order (see  Figure~\ref{order}). 
% We use the notation  to refer to the valence vector of a graph $G$. 
 
% The vertices are ordered in such a way their valencies are non-increasing. Define the valence vector   

\begin{figure}[h]
  \centering
   {\includegraphics[scale=0.07]{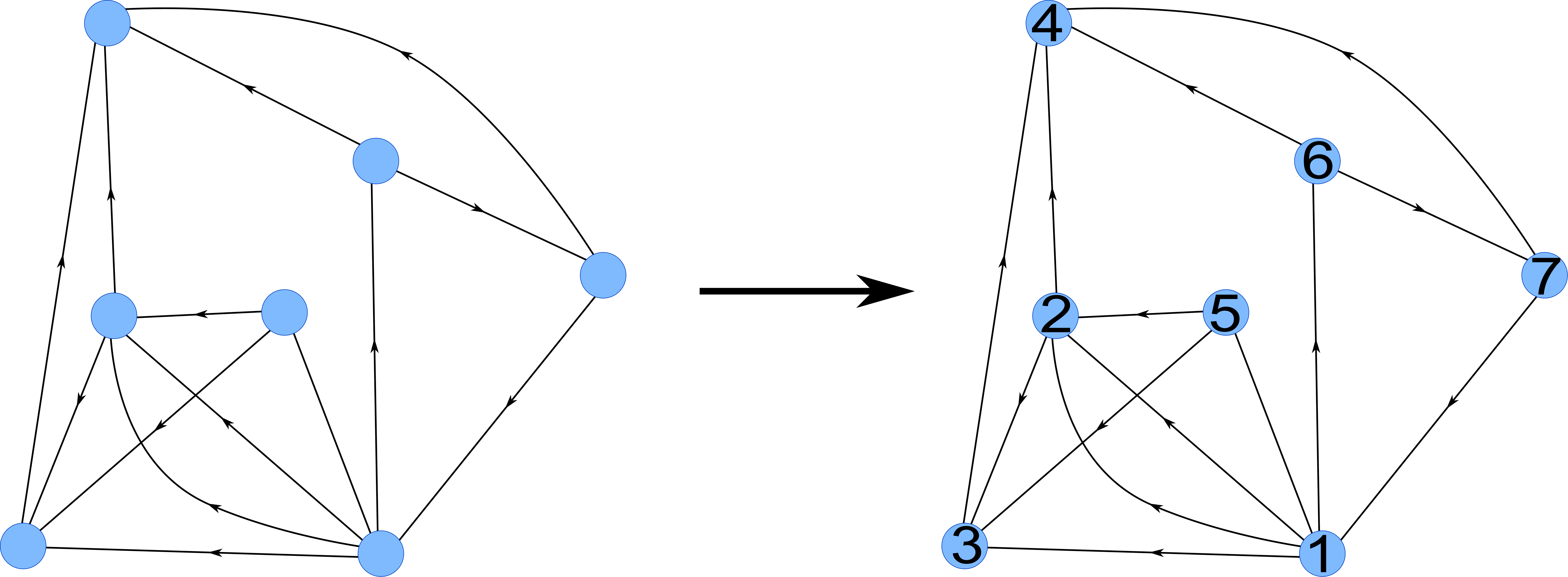}
   \caption{Vertices of a graph $G$ are ordered by their valances. Here, $P_{val}(G)=\langle 6,5,4,4,3,3,3 \rangle $.}
   \label{order}
  }
\end{figure}

%\noindent {\bf The Clique Vector.} 
%The third vector is obtained by looking at cliques associated to $U(G)$. An $n$-clique is a complete subgraph of $n$-vertices.
%For every vertex $v_i$ in the graph $G$ we associate the number of cliques (induced subgraphs of $U(G)$ isomorphic to the complete graph $K_k$ for some $k$) that contain this vertex. 
 The vertex order of the clique vector follows  the same  predetermined order of vertices for $G$. 
 %defined for $P_{val}(G)$. 
 We denote this vector by %$P_{cq}(G)=\langle$\# cliques $(v_i)\rangle_{v_i\in V(G)}$.
 $P_{cq}(G)=\langle cq(v_i)\rangle_{v_i\in V(G)}$.
An example of construction of $P_{cq}(G)$ is depicted in  Figure~\ref{clique_vector}.

\begin{figure}[h]
  \centering
   {\includegraphics[scale=0.06]{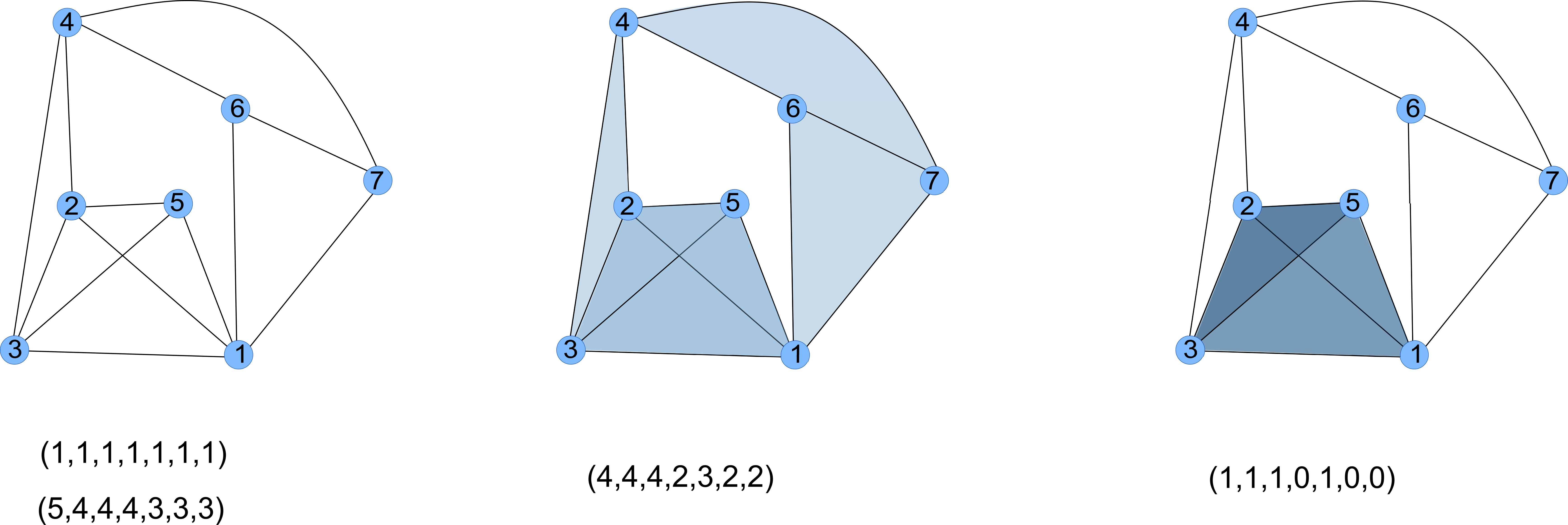}
   \caption{The number of cliques associated with the vertex $v_i$, vertices ordered as in Figure~\ref{order}. Left: $k=1$ vertices and $k=2$ edges. Middle: $k=2$. Right: $k=3$. This graph has no 
   cliques of size higher than $4$. 
   The clique vector in this example is $P_{cq}(G)=\langle 11,10,10,7,8,6,6 \rangle$ which is the sum of the $4$ vectors.}
   \label{clique_vector}
  }
\end{figure}

\noindent {\bf The Graph Vector.} 
The {\it graph feature vector} $P(G)$ is defined by concatenating the vectors $P_{gl}(G)$,  $P_{val}(G)$ and $P_{cq}(G)$. For a graph $G$, the number of entries of the vectors $P_{val}(G)$ and $P_{cq}(G)$ are the same
 and therefore %.Let $|V(G)|$ be the number of vertices of $G$, then 
$P(G)$ is a vector in $\R^{2|V(G)|+3}$. We denote the set of vectors associated to 
$\mc{G}_{\mc D}$ with 
$S_{\mc D}$, or simply $S$ .

%\subsection{Normalization} \label{subsec:normalize}

%Following the suggestion from Nathan Lazar, we noramlize 
%entries of feature vectors. 
%Let $\mathcal{G}=[G_1,...,G_n]$ be the given graph data set.
%Let $P(G_i)=\langle v^i_1, \ldots, v^i_{k(i)} \rangle$, where $k=|V(G_i)|$. 
%Recall that each entry $v^i_j$ is non-negative for all $i$ and $j$.
%Denote by $M_j$ and $m_j$ the maximum and minimum, respectively, of $j$-th entry of $P(G_i)$ over all $i$:
%$$ M_j={\rm max} \{ \ v^i_j \ | \ i=1, \ldots, n \ \} , \quad
%m_j = {\rm min} \{ \ v^i_j \ | \ i=1, \ldots, n \ \}. $$
%Then redefine each entry $v^i_j$ by $(v^i_j - m_j) / (M_j - m_j)$, which we call
%{\it (affine) normalized entries}. 
%We use the same notation $v^i_j$ for the new normalized entries.
%Note that the new entries satisfy $0 \leq v^i_j \leq 1$ for all $i, j$.
%We retain the same notation for $P(G_i)$ for the features vector with normalized entries.

\noindent {\bf The Point Cloud.}
Observe that the number of entries of the vectors in $S$ % $S_{\mc D}$ 
is not uniform, because this number depends on the number of vertices in the corresponding graph. 
In order to work in the common Euclidean space, we expand some of the vectors (by appending 0's) to obtain a consistent number of entries in all vectors. This modification is obtained in the following way. 

%Let $\mathcal{G}=\{G_1,\ldots,G_n\}$ be a graph data set. We map the set $\mathcal{G}$ to a Euclidean space in order to obtain clustering by  TDA analysis. We start by applying the coordinate function $P$ constructed above to every graph in $\mathcal{G}$. The point $P(G_i)$ 
%in general belong to different Euclidean spaces. The points need to be embedded in the same Euclidean space before doing any further analysis. For this purpose, we augment each feature vector $P(G_i)$ by zeros to make them all in the same space. To put the vectors $P(G_i)$ in the same Euclidean space for all $i=1, \ldots, n$, 
%we modify the vectors $P(G_i)$ as follows.

Let $d=\max\{ \ |V(G)| \ | \  G\in \mc{G}_{\mc D} \  \}$. 
If  the valence  vector of $G$ is 
%originally defined in Subsection~\ref{subsec:normalize}
 $$
 P_{val}(G)=\langle 
v_1, v_2,\ldots, v_{|V(G)|}
\rangle, 
$$
then we construct an auxiliary valence vector for $G$ with 
%redefine $P_{val}(G_i)$ by 
$$
\hat P_{val}(G)=\langle 
v_1, v_2,\ldots, v_{|V(G)|}, 0, \ldots, 0
\rangle
$$
increasing the number of entries of $P_{val}(G)$ to $d$ such that 
 $d-|V(G)|$ entries of zeros are added at the end.
%The new clique vector $P_{cq}(G)$ is similarly defined by concatenating $d-|V(G)|$ copies of zeros to make it a vector of dimension $d$.
Similarly we construct auxiliary clique vector $\hat P_{cq}(G)$ by adding $d-|V(G)|$ zeros at the end of $P_{cq}(G)$. 
The graph vector $\hat P(G)$ is redefined with 
the concatenation $\langle P_{gl}(G), \hat P_{val}(G),\hat P_{cq}(G)\rangle$. For our graph data the $\mathcal{G}_{\mathcal{D}}$ the maximum number of vertices in a graph is $d=43$.

We abuse the  notation and use $P(G)$ instead of $\hat P(G)$ to refer to the zero-augmented feature vector associated with a graph $G$. The final point cloud set $S=\{P(G)| G\in \mc{G}_{\mc D} \,\}$ forms a subset of $\mathbb{R}^{2d+3}=\mathbb{R}^{89}$. 

For comparison, we also consider the point cloud $S_{gl}$ from the global features vector 
$P_{gl}(G)$. The point cloud $S_{gl}$ is in $\R^3$. There are $283$ vectors obtained with the above adjustments.

It is important to notice that the entries of the vectors $P$, and $P_{gl}$ for a graph $G$ are graph isomorphism invariants. 

\begin{lemma}
If the graphs $G$ and $G^{\prime}$ are graph isomorphic (but not necessarily label preserving) then $P(G)=P(G^{\prime})$ and $P_{gl}(G)=P_{gl}(G^{\prime})$.
\end{lemma}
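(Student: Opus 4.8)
The plan is to verify that every numerical ingredient of $P$ and $P_{gl}$ is invariant under an isomorphism $\phi\colon G \to G'$ of the graphs that need not respect edge labels. Since $\phi$ is a bijection on vertices carrying edges to edges, the counts $|V(G)| = |V(G')|$ and $|E(G)| = |E(G')|$ are immediate; and because vertices of degree $0$ (the isolated vertices excluded from the count) are sent to vertices of degree $0$, the convention on isolated vertices is respected as well. First I would record that $\phi$ descends to an isomorphism $U(G) \to U(G')$ of the associated undirected graphs: collapsing each oppositely oriented pair of parallel edges to a single undirected edge and forgetting directions and colors is a construction that commutes with relabeling vertices, since it depends only on adjacency. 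Consequently $\phi$ carries cliques of $U(G)$ bijectively to cliques of $U(G')$ of the same size, which gives $CN(G) = CN(G')$ and hence $P_{gl}(G) = P_{gl}(G')$.

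Next I would treat the vertex-level features. For each vertex $v$, the valency $val(v)$ --- the total number of edges incident to $v$, counting both orientations of any parallel pair --- equals the degree of $v$ in the underlying multigraph and is therefore preserved by the adjacency-preserving bijection $\phi$; and the clique number $cq(v)$ is preserved by the induced isomorphism $U(G) \to U(G')$ from the first paragraph. Thus $val(v) = val(\phi(v))$ and $cq(v) = cq(\phi(v))$ for every $v$, so that the multiset $\{(val(v), cq(v)) \mid v \in V(G)\}$ coincides with the corresponding multiset for $G'$.

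The heart of the argument --- and the step requiring the most care --- is that $P_{val}$ and $P_{cq}$, although defined through a \emph{chosen} ordering of the vertices, are in fact functions of this multiset alone, so that equal multisets force equal vectors. Recall the vertices are ordered by non-increasing valency, with ties broken by non-increasing clique number. I would argue that this ordering is well defined up to transpositions of vertices sharing \emph{both} the same valency and the same clique number; such a transposition leaves the positionally paired entries of $P_{val}$ and $P_{cq}$ unchanged, so each vector is independent of the arbitrary choice made among tied vertices. Since $G$ and $G'$ yield the same multiset of pairs, the lexicographic sort produces identical sequences, giving $P_{val}(G) = P_{val}(G')$ and $P_{cq}(G) = P_{cq}(G')$.

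Finally, because $|V(G)| = |V(G')|$, the zero-padded vectors $\hat P_{val}$ and $\hat P_{cq}$ append the same number $d - |V(G)|$ of trailing zeros, so the padding preserves the equalities. Concatenating $P_{gl}$, $\hat P_{val}$, and $\hat P_{cq}$ then yields $P(G) = P(G')$. I expect the only genuine obstacle to be the well-definedness under ties addressed in the third paragraph; everything else reduces to the routine observation that vertex count, edge count, degree, and clique structure are classical isomorphism invariants.
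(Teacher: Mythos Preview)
Your proposal is correct and follows essentially the same approach as the paper's own proof: both verify the global invariants directly, then show that $val$ and $cq$ are preserved vertex-wise by the isomorphism, and finally argue that the chosen vertex ordering depends only on the multiset of pairs $(val(v),cq(v))$. The paper phrases this last step via a partition $V_1,\ldots,V_s$ of $V(G)$ into blocks of constant $(val,cq)$, whereas you phrase it as invariance under transpositions of tied vertices; these are equivalent formulations of the same observation, and your added remarks on isolated vertices and zero-padding are welcome details the paper leaves implicit.
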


\begin{proof}
Let  $\phi:G\rar G'$ be a graph isomorphism.  
Then $G$ and $G'$ have the same number of vertices, edges and the size of the maximal cliques
in their undirected versions $U(G)$ and $U(G')$. Therefore $P_{gl}(G)=P_{gl}(G')$ and the first three entries of 
$P(G)$ and $P(G')$ are the same. Also the number of non-zero entries in $P(G)$ and $P(G')$ are the same. 
A graph isomorphism maps vertices of $G$ to vertices of $G'$  with the same number of outgoing and incoming edges. 
Similarly, the number of cliques incident to a vertex in $U(G)$ is the same to the number of cliques of the corresponding vertex in $U(G')$. Let $V_1,\ldots, V_s$ be a partition of $V(G)$ such that 
\begin{itemize}
\item[i] for all $v,w\in V_i$, for all $i=1,\ldots,s$ $val(v)=val(w)$ and $cq(v)=cq(w)$, and 
\item[ii] for all $v\in V_i$ and $w\in V_j$ with $i<j$ $(i,j=1,\ldots,s)$, 
either $val(v)>val(v')$ or $val(v)=val(v')$ and $cq(v)>cq(v')$.
\end{itemize}
Then $\{ \phi(V_1),\ldots, \phi(V_s) \}$ is a partition of the vertices of $G'$ satisfying the properties [i] and [ii].
Any order of vertices of $V(G)$ (resp. $V(G')$) that has non-increasing valencies and non-increasing clique numbers must list
vertices of $V_i$ (resp. $\phi(V_i)$) before vertices of $V_j$ (resp. $V(G')$) whenever $i<j$.  Therefore it must be
that 
$P_{val}(G)=P_{val}(G')$ and $P_{cq}(G)=P_{cq}(G')$.
\end{proof}

In our analysis there are three reasons that reduced the data from $688$ graphs to $283$. Many of these graphs are isomorphic if the edge color is ignored, and directed graphs often reduce to isomorphic undirected graphs.
Of course there are graphs $G$ and $G'$ that are non isomorphic but $P(G)=P(G')$.  Consider attaching two edges to a 4-cycle to obtain  a 6-vertex graph. They can be attached to neighboring or to diagonally opposite vertices of the cycle. In both cases the associated vectors will be the same. 

%%%%%%%%%%%%%%%%%%%%%%%%%%%%%%%%%%%%%%%
\section{Clustering Analysis with TDA} 
%%%%%%%%%%%%%%%%%%%%%%%%%%%%%%%%%%%%%%%
For  a data set $S \subset \mathbb{R}^n$, in our case corresponding to a set of directed graphs, a TDA analysis 
of persistent $0$-dimensional homology gives rise to a
hierarchy of connected components of (clustered) graphs as described below.

To understand the distribution of the points of $S$ in $\mathbb{R}^{n}$ we 
use the notion of the neighborhood graph, as defined below,  to construct a hierarchy of undirected graphs whose vertices are $S$. The neighborhood graph of $S$ depends on a chosen distance function. In our case the distance $d$ is the Euclidean distance 
between two points 
$\sqrt{\sum_i(x_i-y_i)^2}$.

%The neighbohood graph of a point cloud $S$ for a non-negative number $\epsilon \geq 0$ is defined as follows.

\begin{definition}
Let $S$ be a set of points in $\mathbb{R}^n$ and let $\epsilon\geq 0$ be a non-negative number. The {\em $\epsilon$-neighborhood graph} is an undirected graph $G_{\epsilon}(S)$, where $G_{\epsilon}(S) = (S, E_{\epsilon}(S))$
and
$E_{\epsilon}(S) = \{[u, v] \mid d(u, v) \leq  \epsilon, u, v \in  S, u\neq v\}$.
\end{definition}

%The $\epsilon$-neighborhood graph
%can be used to obtain a hierarchical clustering on a point cloud data set. This
The clustering analysis  is done by considering a %nested 
sequence of neighborhood graphs $G_{\epsilon_1}(S), G_{\epsilon_2}(S),\ldots$ 
for $S\subset \mathbb{R}^n $ obtained by a sequence of incrementally increasing values 
$\epsilon_1<\epsilon_2<\cdots$. 

\begin{definition}
A {\em cluster} of $S$ at level $\epsilon$ is a connected component in the neighborhood graph $G_\epsilon(S)$.
\end{definition}

%\subsection{Properties of Graph Vectors}
We observe some facts about the graphs vectors $P$ and $P_{gl}$. % that will aid us in our cluster analysis.
Suppose $\mc G$ is a family of graphs and $S=S(\mc G)$ and $S_{gl}=S_{gl}(\mc G)$ are points in $\mb R^n$
obtained as described above. The vectors in the set $S$ and $S_{gl}$ are all part of the integer lattice of $\mb R^{n}$ 
and $\mb R^3$ respectively,
therefore any distance between two distinct vectors is at least $1$. The observation below indicates that small changes in the graphs represented by the vectors induce larger distances of the vectors in $S$.

\begin{lemma}\label{gaps}
Let $G,G'\in \mc G$. Then the following hold.
\begin{itemize}
\item[(a)] If $G^{\prime}$ is obtained from $G$ by addition of one vertex and one edge incident to that vertex. Then $d(P(G),P(G^{\prime}))\geq 3$ and $d(P_{gl}(G),P_{gl}(G^{\prime}))\geq \sqrt{2}$.
\item[(b)] If $G^{\prime}$ is obtained from $G$ by addition of one directed edge without changing the total number of vertices, nor the number of cliques, then $d(P(G),P(G^{\prime}))\geq 2$. % and $d(P_{gl}(G),P_{gl}(G^{\prime}))\geq 1$.
\item[(c)] If  $G^{\prime}$ is obtained from $G$ by addition of one edge that adds a clique to the graph $U(G^{\prime})$
without changing the number of vertices, then $d(P(G),P(G^{\prime}))\geq \sqrt{5}$.
\item[(d)] If $G'$ is obtained from $G$ by changing the target of one edge from vertex $v$ to vertex $v'$ without changing the number of the cliques, either $d(P(G),P(G^{\prime}))\geq \sqrt{2}$ or
$d(P(G),P(G^{\prime}))=0$.
\end{itemize}
\end{lemma}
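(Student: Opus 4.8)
The plan is to expand the squared Euclidean distance over the three blocks that make up the feature vector,
\[
d(P(G),P(G'))^2=\|P_{gl}(G)-P_{gl}(G')\|^2+\|P_{val}(G)-P_{val}(G')\|^2+\|P_{cq}(G)-P_{cq}(G')\|^2,
\]
and to bound each block from below separately. Every coordinate is an integer, so each coordinate difference $\delta$ satisfies $\delta^2\ge|\delta|$, with a strict surplus $\delta^2-|\delta|\ge 2$ as soon as $|\delta|\ge 2$. The global block is immediate: in each of (a)--(d) the triple $(|V|,|E|,CN)$ changes by an explicit integer vector -- $(1,1,\ge 0)$ in (a), $(0,1,0)$ in (b) and (c), and $(0,0,0)$ in (d) -- using that $CN$ never decreases (it is monotone under addition of vertices and edges, and unchanged in (d) by the clique hypothesis). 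Squaring gives the stated global contribution, in particular $\ge 2$ in (a), which already yields the $\sqrt2$ estimate for $P_{gl}$.

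For the valence and clique blocks I will deliberately \emph{not} track individual reordered coordinates, and instead use two order-independent totals: $\sum_v val(v)=2|E(G)|$ and $\sum_v cq(v)=\sum_K|K|$, the latter sum running over all cliques (complete subgraphs of every size, including singletons and edges) of $U(G)$. Because the zero-padded vectors carry these same totals, the signed coordinate sums $\sum_i\delta_i$ equal the changes $2\,\Delta|E|$ and $\Delta\!\sum_K|K|$. Hence $\sum_i\delta_i^2\ge\sum_i|\delta_i|\ge\big|\sum_i\delta_i\big|$ bounds each block by the absolute change of its total, with no need to resolve how the vertices are permuted.

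Each part then follows by computing these totals. In (b) the added directed edge is a second edge between two already-adjacent vertices (otherwise a clique would appear), so $U(G')=U(G)$ and the clique block vanishes, while $|E|$ rises by one and the two endpoint valences each by one; the global and valence blocks then supply the bound of (b). In (c) the new edge additionally creates at least the $2$-clique on its endpoints, so $\sum_K|K|$ increases by at least $2$ and the clique block contributes, giving $\ge 5$. In (d) redirecting an edge conserves both totals (one in-degree drops and another rises, and the clique counts are unchanged by hypothesis), so each block difference is a vector of integers summing to $0$: it is either the zero vector or has absolute sum at least $2$, yielding $d=0$ or $d\ge\sqrt2$. Part (a) is the one case where the totals alone fall short: the clique total rises only by $3$ (the new vertex $u$ contributes its singleton $\{u\}$ and the edge $\{u,w\}$), giving the clique block merely $\ge 3$. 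Here I use the identity $\sum_i\delta_i^2=\sum_i|\delta_i|+\sum_i(\delta_i^2-|\delta_i|)$ together with one robustly located coordinate: at position $r=|V(G)|+1$ the padded clique vector of $G$ is a trailing zero, whereas in $G'$ that position holds the lowest-ranked genuine vertex, whose valence is at least $1$ and whose clique number is therefore at least $2$ (its own singleton plus an incident edge). This single coordinate has $|\delta_r|\ge 2$ and adds a surplus $\ge 2$, pushing the clique block to $\ge 5$ and the total to $\ge 9$, i.e.\ $d\ge 3$.

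The main obstacle, and the reason a naive coordinatewise argument is fragile, is the coupling introduced by the vertex reordering: the clique vector $P_{cq}$ is listed in the valence-then-clique order, \emph{not} sorted by its own values, so a single local edit can permute many clique coordinates at once and the correspondence between $P_{cq}(G)$ and $P_{cq}(G')$ is opaque. The device that sidesteps this completely is to argue only with reorder-invariant data -- the two global sums and the one tail coordinate forced by the zero-padding convention -- so that the permutation of all remaining coordinates never has to be determined.
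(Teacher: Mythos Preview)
Your approach differs from the paper's and is, in one important respect, more careful. The paper argues coordinate by coordinate, asserting that the entries of $P(G')$ ``corresponding to'' $val(v)$, $cq(w)$, etc.\ exceed those of $P(G)$ by at least one; this tacitly ignores that a single local edit can reshuffle the vertex ordering, so individual coordinates of $P(G)$ and $P(G')$ have no canonical correspondence. Your device---bounding each block by the change in the reorder-invariant totals $\sum_v val(v)=2|E|$ and $\sum_v cq(v)=\sum_K|K|$, and in~(a) extracting the extra surplus from the one tail coordinate pinned down by the zero padding---sidesteps this entirely and makes parts (a), (c), (d) fully rigorous where the paper's argument is only heuristic.

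Two remarks on (b). First, your claim that ``the clique block vanishes'' is not literally correct: although $U(G')=U(G)$ and every $cq(v)$ is unchanged, the valence-based reordering can permute the entries of $P_{cq}$, so the block difference need not be zero. This is harmless for the lower bound (the block is still $\ge 0$), but it is worth stating precisely. Second, and more substantively, your three blocks in (b) give only $1+2+0=3$, i.e.\ $d\ge\sqrt3$, not the stated $d\ge 2$. The paper's own proof in fact also concludes only $d\ge\sqrt3$ here, and the bound $d\ge 2$ is actually false: take $G$ to be two vertices joined by a single directed edge and $G'$ the same pair with the reverse edge added; then $P_{gl}=(2,1,2)\to(2,2,2)$, $P_{val}=(1,1)\to(2,2)$, $P_{cq}=(2,2)\to(2,2)$, whence $d(P(G),P(G'))=\sqrt3$. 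So the discrepancy lies in the statement of (b), not in your method---you have obtained exactly what the paper's proof obtains.
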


\begin{proof}
(a)
The addition of a vertex in $G'$ changes the number of non-zero entries in $P(G')$ in two places, once at $P_{val}(G')$
and again at $P_{cq}(G')$. Let $w$ be the new vertex in $G'$ added to $V(G)$ and let $[v,w]$ be the new edge in $G'$ connecting $v\in V(G)$ with the new vertex $w$. Then $w$ can be taken to be the last vertex in $V(G')$ in the order of the vertices, while the order of $v$ in $V(G')$ might be either the same as its order in $V(G)$ or different. 
In both cases the entries in $P(G')$ corresponding to 
$|V(G')|$, $|E(G')|$, $val(v)$, $cq(v)$, $val(w)$  are at least one more than the corresponding entries  in $P(G)$ and the entry of $cq(w)$ is at least two more (a 1-clique vertex $w$ and a 2-clique the new edge) than the corresponding entry in $P(G)$ which is 0. So $d(P(G),P(G'))=\sqrt{\sum_i(x_i-y_i)^2}\ge\sqrt{5+2^2}\ge 3$, and $d(P_{gl}(G),P_{gl}(G'))\ge\sqrt{2}$.

The proofs of (b) and (c) follow a similar argument. Note that in case of (b), if the new directed edge is incident to vertices $v$ and $w$, then because the number of cliques in $U(G')$ is not changed from the number of cliques in $U(G)$, there is an edge in $G$ incident to $v$ and $w$ in opposite direction. So $P(G')$ has at least one more in the entries $|E(G)|$, 
$val(v)$ and $val(w)$. Observe that this may imply a change in the order of the vertices, in which case there may be a difference in the entries corresponding to the $cq(v)$ and $cq(w)$ which would increase the distance between the vectors. Therefore $d(P(G),P(G'))\ge \sqrt{3}$.
%and  $d(P_{gl}(G),P_{gl}(G'))\ge \sqrt{1}=1$.

For the case of (c), the entries of $|E(G')|$,  $val(v)$, $val(w)$, $cq(v)$, $cq(w)$ in vector $P(G')$ have a change of at least one and therefore the distance $d(P(G),P(G'))\ge \sqrt{5}$ and  $d(P_{gl}(G),P_{gl}(G'))\ge \sqrt{1}=1$. The case (d) follows the argument of (b) if the valencies change or, if valencies don't change, the graphs are represented by the same vectors and the distance is 0.
\end{proof}

\subsection{Analyzing The Data Using Neighborhood Graphs}
 
A \textit{filtration of a graph} $G$ is a sequence of nested graphs $  G_1 \subseteq G_2 \subseteq \cdots \subseteq G_k=G$ where each $G_i$ is a subgraph of $G_{i+1}$. 
The definition of the neighborhood graph on a point cloud $S$ naturally induces a filtration 
for a connected graph with vertices  $S$. Namely, 
 given a point cloud $S \in \mathbb{R}^n $ and a finite sequence of non-negative numbers $0=\epsilon_1 < \epsilon_2 < \cdots < \epsilon _k $ we obtain a filtration  $G_{\epsilon_1}(S) \subseteq G_{\epsilon_2}(S) \subseteq \cdots\subseteq G_{\epsilon_k}(S)$. We assume that $\epsilon_1=0$, which implies that $E(G_{\epsilon_1})=\emptyset$. 
This filtration  also helps to extract  the connected components (clusters) of $S$ at various spatial resolutions. For a given $\epsilon$, each connected component of $G_{\epsilon}(S)$ corresponds to a cluster of directed graphs whose corresponding points in $\mathbb{R}^n$ are connected by edges that are of lengths less than $\epsilon$. This means that the vectors associated with the graphs are at 
most $\epsilon$ apart, i.e., the graph properties indicated in the vectors are similar and are within $\epsilon$ neighborhood from each other. To have a better information about the topological properties encoded in a filtration one usually considers \textit{the persistence diagram} of the filtration. For our purpose, the persistence diagram 
describes a way  the connected components of the neighborhood graph merge together as we increase the value of $\epsilon$. The persistence diagram is also equivalently described by the persistence \emph{barcode}~\cite{Ghrist2008}.  The barcode  construction is described as follows.

Let $S=S_{\mc D} \subset \mathbb{R}^n$, where $n=2d+3$ (in our case $n=89$).
In  Figures~\ref{barcode} and \ref{secondbarcode}, 
the vertical axis enumerates points of $S$, and  $\epsilon$-values are listed on the horizontal axis. 
At $\epsilon_1=0$, $E(G_{\epsilon_1})=\emptyset$, and each point of $S \subset \R^n$ forms a single connected component. There are $|S|$ connected components, and hence
the number of bars in the barcode at value  $0$ is equal to the number of data points in $S$ corresponding to
 the birth of all connected components.
%As $\epsilon$ increases the number of connected components of $G_{\epsilon}(S)$ is reduced. 
With 
appropriate increments 
of $\epsilon$ new edges are added to the neighborhood graph and the connected components start joining each other 
forming larger clusters. The merging event  of  %between two 
connected components is represented by a
 termination of all but one of the corresponding bars of the barcode.
 The choice of the bar that does not terminate in a merge of components is arbitrary,
 and we use the established convention  (see ~\cite{Ghrist2008}) where bars are vertically ordered by their length 
 from the shortest at the bottom of the diagram to the longest on the top.

The number of connected components of the graph $G_{\epsilon}$ is the number of horizontal bars intersecting the vertical line at distance equal to $\epsilon$. For instance, from Figure \ref{barcode} we deduce that the number of connected components in $G_{\epsilon}(S)$ is $2$ for $\epsilon=15$ indicating two clusters at that distance. 
Typically, the filtration ends with a neighborhood graph that has a single connected component. That is, the sequence of  $\epsilon$ values  increase from $0$ to the %$\epsilon$ 
value that gives rise to a single component graph. In the case of data $\mc D$ %and the two vectors that we studied, 
 for the set of global vectors and the point clouds $S$ and $S_{gl}$, the $\epsilon$ values range from $0$ to $22$ and $0$ to $15$ respectively.
% (i.e., a clique). 
% \end{itemize}

% Figures~\ref{barcode} and \ref{secondbarcode}
% are persistent diagrams for 
%the point cloud $S$ % $S_{\mc D}$ 
%for the  graph  vectors (including the global and the local features of the graph) and 
%the point cloud $S_{gl}$ obtained from the  global features vectors, respectively. 
% The barcode diagrams  given in Figures \ref{barcode} and \ref{secondbarcode} were generated by using the software JavaPlex \cite{Javaplex}.

\begin{figure}[H]
  \centering
   {\includegraphics[scale=0.25
   ]{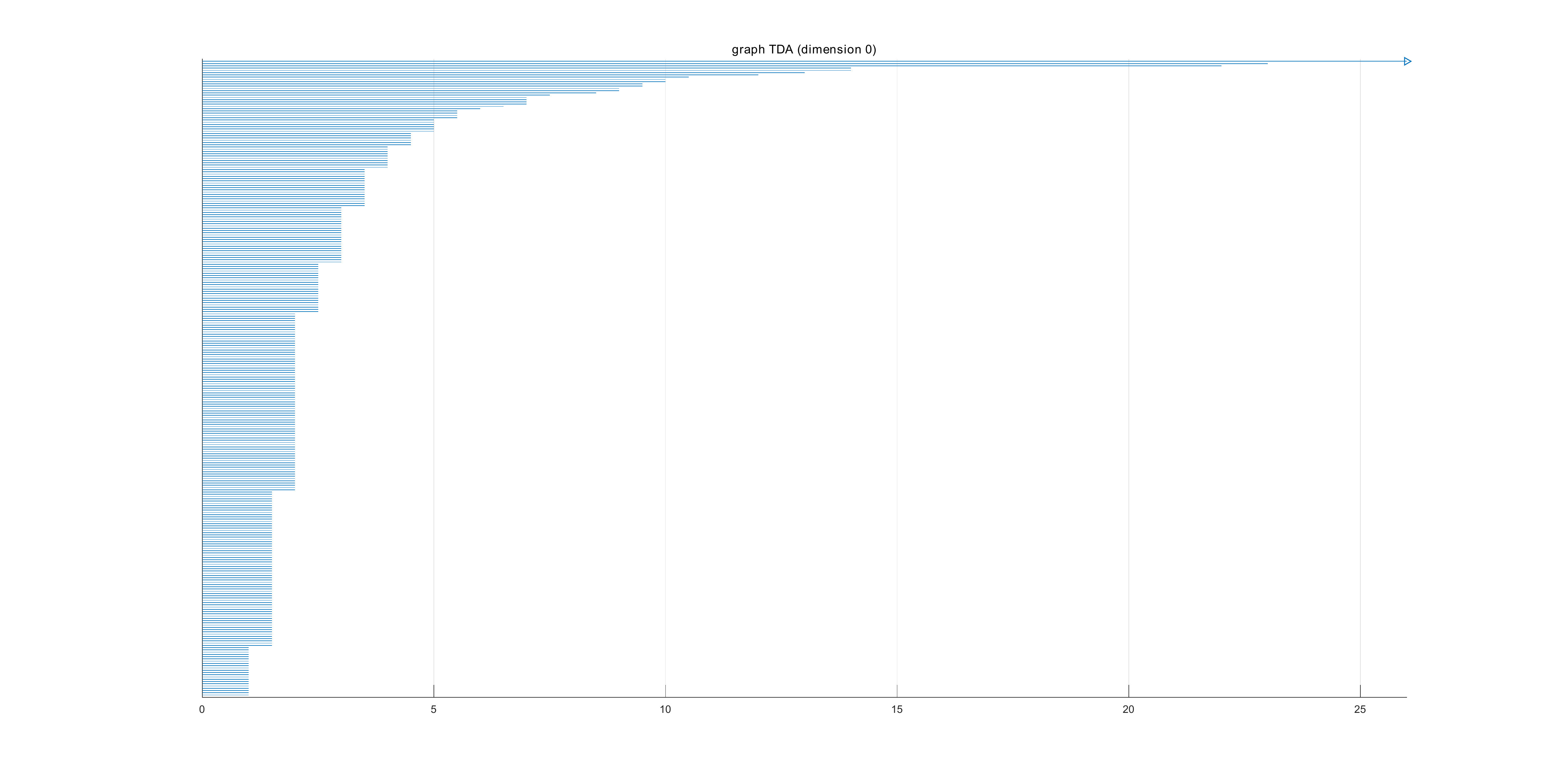}
   \caption{The barcode diagram describing the birth and death of the connected components of the neighborhood graph of the dataset $S$.}
   \label{barcode}
  }
\end{figure}

\subsection{Tree Diagrams Representing Merging Components}
The merging events of connected components described in the persistence diagram can be encoded using a tree diagram called a \textit{dendrogram}~\cite{murtagh1983survey}. 
%In this case connected components are considered as clustering and 0-persistence homology is equivalent to hierarchal clustering. 
The bottom points of the tree diagram correspond to the points of $S$ (resp. $S_{gl}$),
that  also correspond to the connected components of $G_0(S)$.
The vertical direction of the tree diagram represents values of $\epsilon$.
%When some edges from the bottom merge into a single point (an upside branch of the tree) to a single edge in the tree diagram 
%at $\epsilon_i$, this represents that 
%the corresponding points of $S$ belong to a single connected component in $G_{\epsilon_i}(S)$. 

At each level $\epsilon$ %the vertices of 
the connected components (clusters) are enumerated and each vertex in the tree is labeled by $(i,\epsilon)$ where $i$ is an index that corresponds to the $i$th cluster %connected component 
of the graph at level $\epsilon$. At each level $\epsilon$, the number of nodes corresponds to the number of clusters of $G_{\epsilon}(S)$. For a node (vertex) $v$ at level $\epsilon_i$, the children of $v$ correspond to the clusters %connected components 
at level $\epsilon_{i-1}$ (i.e., the graph $G_{\epsilon_{i-1}}(S)$) that have joined to a single connected component represented by $v$ in $G_{\epsilon_i}$.

For a large enough value of $\epsilon_k$, $G_{\epsilon_k}(S)$ is connected, and it corresponds to the single node (root) of the tree.
The dendrograms corresponding to %information encoded in 
the persistent diagrams for $S$ \ 
and $S_{gl}$
are shown in Figures \ref{tree} and Figure \ref{global-f-tree} in the supplementary documentation, respectively.

\begin{figure}[H]
  \centering
   {\includegraphics[scale=0.04]{S_tree_small.pdf}
   \caption{The dendrogram clustering tree of dataset $S$.}
   \label{tree}
  }
\end{figure}

\subsection{Implementation}
The point cloud generated from the data $\mathcal{D}$ was generated using a custom Python script. The persistence diagrams were generated using Javaplex \cite{Javaplex} and the dendrogram tree diagrams were generated using Mathematica \cite{Mathematica}. The sequence data, the graph data and the scripts are available at \url{http://knot.math.usf.edu/data/GeneSegmentInteractions/dna_graph_study/}.      

\begin{comment}
An interesting feature of our specific data set is the sharp increase of growth of in the largest connected components of the neighborhood graph. In Figure \ref{largeconnected components} we plot this relationship. The vertical access corresponds to the distance $\epsilon$ and the horizontal access correspond the number of vertices in the largest corrected component in the neighborhood graph.
\begin{figure}[H]
  \centering
   {\includegraphics[scale=0.33]{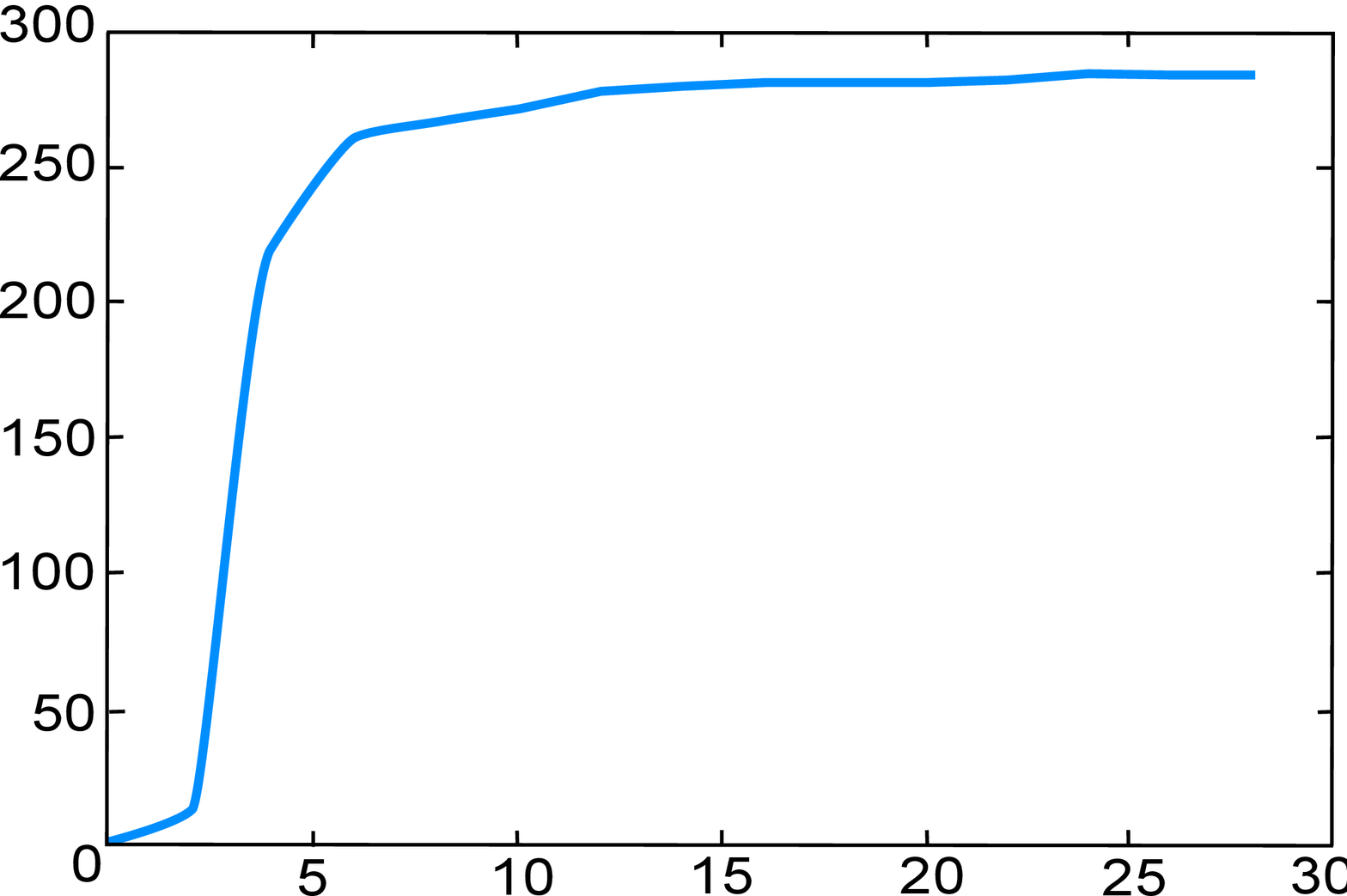}
   \caption{The growth of the largest connected component.}
   \label{largeconnected components}
  }
\end{figure}
This behavior can be visualized  by considering a \textit{multidimensional scaling} \cite{borg2005modern} planar projection of the set $S$. See Figure \ref{MDS} and note that majority of the points are going to clustering early on in in a single large components.    

\begin{figure}[h]
  \centering
   {\includegraphics[scale=0.2]{dna_mds_fig.png}
   \caption{An multidimensional scaling projection of the data set on the plane.}
   \label{MDS}
  }
\end{figure}

\end{comment}								

%%%%%%%%%%%%%%%%%%%%%
\section{Results}
%%%%%%%%%%%%%%%%%%%%

The analyzed data $\mc D$ consists of processed \cite{burns2016recurring} micronuclear contigs obtained after sequencing of {\it O. trifallax} \cite{chen2014architecture}
and is available at \cite{burns2016recurring}. The directed graphs that correspond to the contigs in $\mc D$ can be found at %\cite{Denys}. 
\url{http://knot.math.usf.edu/data/Colored_Graphs/index.html}.

As mentioned, the data $\mc D$ produced  273 distinct vector entries in $\mc S=\mc S_{\mc D}$ that correspond to the same number of isomorphism classes of graphs ranging from 2 to 43 vertices. Each MIC contig corresponds to a vector in 
$\mc S$ while the MAC contigs whose MDS segments do not have any of the types 1, 2 or 3 interactions with MDSs of other contigs represent isolated vertices in the graphs and are not taken in consideration for the construction of $S_{\mc D}$.

We constructed filtration  with $\epsilon$ increments of $.5$ in order to detect small neighborhood changes in the neighborhood graph, these sometimes are reflected by reorienting a directed edge. 

\subsection{Output of Hierarchal Clustering}

The bar code diagram and the dendrogram for the filtration and clustering of the neighborhood graph of $\mc S$ are depicted  in Figure~\ref{barcode}  and Figure~\ref{tree}. As expected by Lemma~\ref{gaps}, the neighborhood graph consists of isolated vertices for $\epsilon \le 1$ and the fist edges appear at $\epsilon =1.5$ when
 there are 14 two point and 4 
three point clusters. The two or three graphs joined at this distance differ from each other by small changes such as a
 single directed edge addition that does not change the cliques. 
 
 At $\epsilon =2$, as noted in Lemma~\ref{gaps}, most points remain distant from each other and only those representing graphs with small changes in their structure are joined by en edge. In addition two, three and in one instance four of the previously formed clusters join in (also with some additional points) to form new clusters, 
 and there are 25 new small two or three point clusters. Most of the points in $\mc S$ remain as isolated vertices. 
 
 At $\epsilon=2.5$ a dramatic change occurs and one  large cluster of 155 elements is formed with a second cluster of 5 points, and several small (two or three point) clusters. All other points stay as isolated vertices.
At this point the feature of the point-cloud becomes clear,
it consists of  a single large cluster, singletons, and some small two or three element components.

At $\epsilon=9.5$, there is one large cluster of 269 points while the second largest cluster is of 4 elements, and there are  10 isolated points.

In the last 5 digits of contig numbers, the second largest cluster consists of:
$$ 88928, \quad  88096,  \quad 67742, \quad 67187. $$
Figure \ref{4 figures} shows the graphs contained in this cluster.

%\begin{figure}[h]
%\begin{minipage}[t]{.32\textwidth}
%  \includegraphics[width=\textwidth]{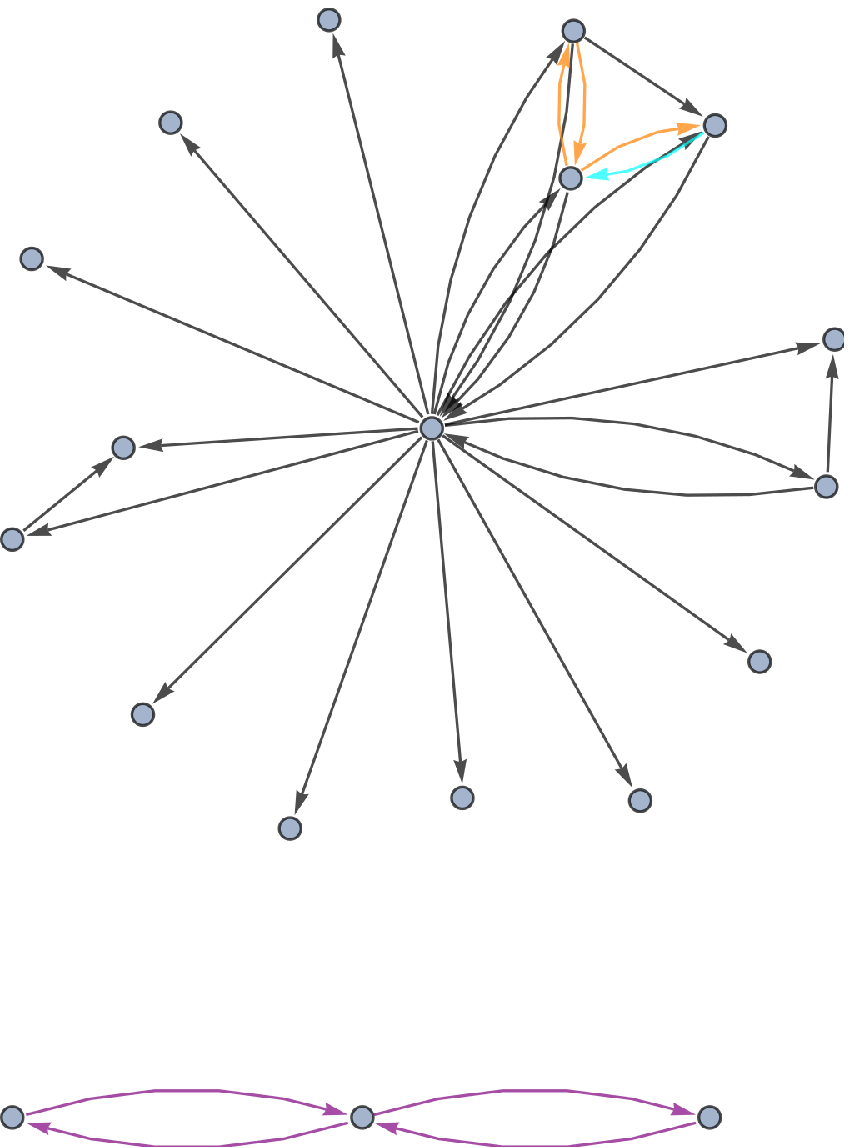}
%    \caption{ctg7180000088096}
    %\label{ctg7180000088096_STANDALONE}
 % \end{minipage}
%\end{figure} 

\begin{figure}[h]
  \centering
   {\includegraphics[scale=0.5]{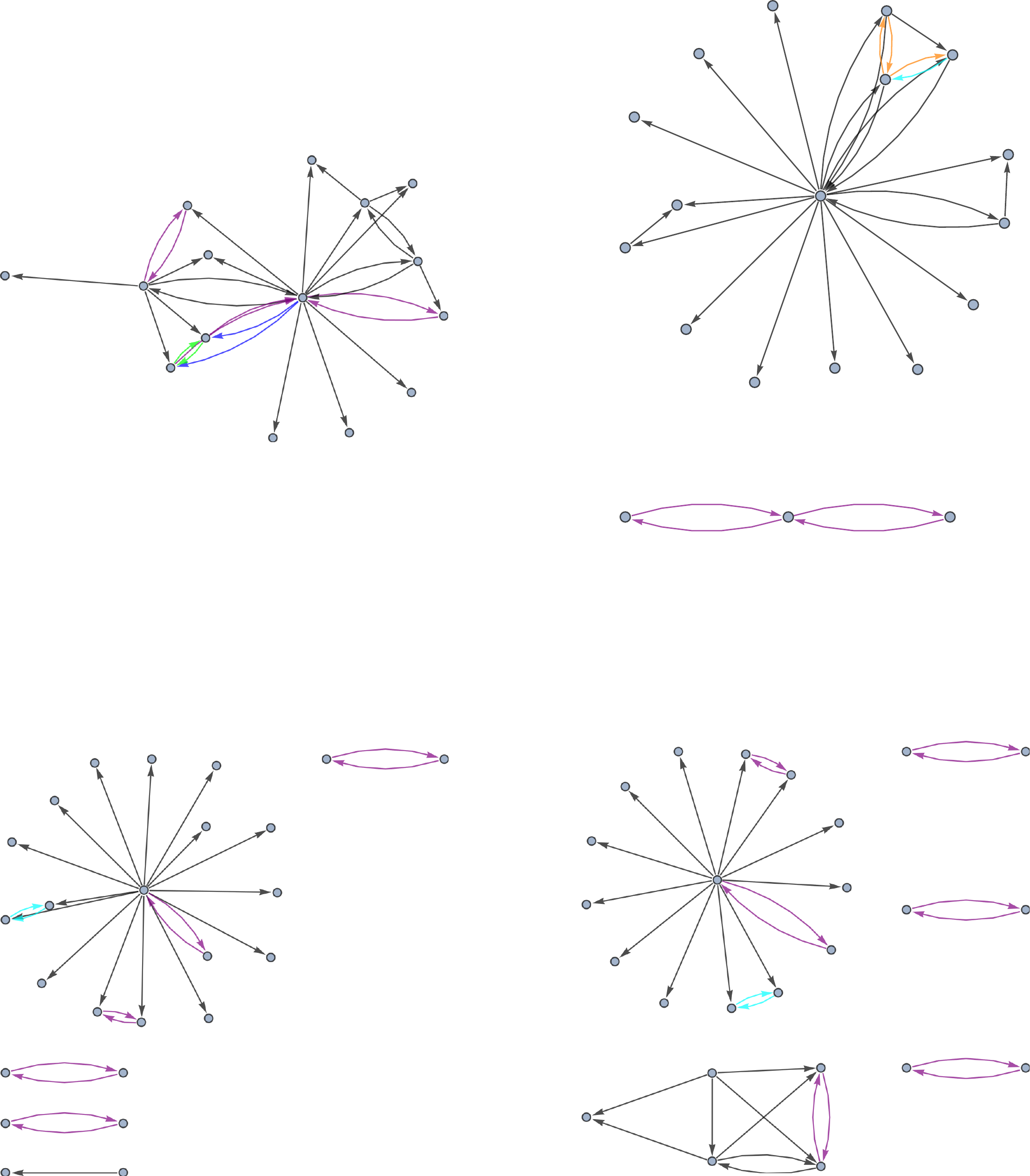}
   \caption{Top ctg7180000088928 and ctg7180000088096. Down ctg7180000067742 and ctg7180000067187.}
   \label{4 figures}
  }
\end{figure}

All four of these graphs contain a `star' vertex that is of high valency having multiple black (label $(0,0,1)$) outgoing edges. 
This indicates that there is one MAC contig whose MDSs  interleave with MDSs of multiple other MAC contigs, and we 
observed that in most of these cases it is one IES of the central `star' MAC contig that contains most or all of the MDSs of the other contigs. This coincides with the observation in \cite{g3} where the depth of these embeddings were considered. 

The isolated points belong to 10 contigs 
$$
67761, \quad
87162, \quad
87484, \quad
67363, \quad
67280, \quad
67243, \quad
67157, \quad
67223, \quad
67417, \quad 
67411.
$$
These are depicted in the corresponding figures  in Supplementary Material Section. We note that some of these graphs have  multiple `star' vertices, or the component that contains a `star' vertex also has additional cycles and cliques. In particular, the two graphs with 6-cliques (contigs $6742$ and $67223$) and the one with a 5-clique (contig $67411$)
are part of these isolated points. Furthermore, the graph with the longest path of 5 vertices (contig $87484$) indicated in 
\cite{g3} as one of the most in-depth embedding of genes within a single IES is also on this list. In all these cases we observe that the majority of the edges are black and purple, meaning that the prevailing inter-gene MDS organization  is interleaving. 

As $\epsilon$ increases, the four-element cluster becomes part of the large cluster at $\epsilon=10.5$ and the 
isolated singleton points join the large cluster one or two at the time until $\epsilon=14.5$ 
when the two, most distant contigs $67517$ and  $67223$ remain isolated until $\epsilon=22$ and $\epsilon=23$ respectively.

%for $S$ (corresponding to the all graph features vector) show that as 
%the filtering diameter $\epsilon$ increases by increment of $\epsilon=0.5$, the following clusters appear.

%These are depicted in the corresponding figures in Supplementary Material Section.
% This is the same as the first draft. MS 2017-06-21

The pattern of clusters for $S_{gl}$ is similar to that of $S$. A large single cluster is formed at value $\epsilon=1.5$, with 2 clusters of 5 elements, 3 clusters of 2 elements, and 23 singleton clusters.

At $\epsilon=4.5$, 
the clusters consist of a large single cluster, the second largest of 9 elements, the two clusters of two elements, and 5 singleton clusters.
The size two clusters are
$\{ 67417, % singleton in S
67243 % singleton in S
\}$ and 
$\{ 67187, % appears in 4-elmt cluster of S
67228 % doesn't appear in S
\}$. 
The elements of the former cluster appear as isolated points in the neighborhood $\epsilon=9.5$ of $S$, 
while $67187$ of the latter cluster,  appears in the 4-elements cluster of $S$, and 
$67228$ is in the largest cluster of $S$. 

The isolated points for $\epsilon=4.5$ are
$
67223, \, % singleton in S
67363, \, % singleton in S
67157, \, % singleton in S
67280, \, % singleton in S
87484.  % singleton in S
$
We note that all also appear as isolated points of the neighborhood graph of $S$ for $\epsilon=9.5$. 

In this case, as in the case of $S$, the two most distant graphs correspond to the contigs $67517$ and  $67223$
 that join the large cluster at $\epsilon =14.5$ and $\epsilon=18.5$ respectfully.

%The tree diagrams in Figure~\ref{tree}  for $S_
%{gl}$ (corresponding to the global features vector) show that as 
%the filtering diameter $\epsilon$ increases in the increment of $0.5$, the following clusters appear.

\begin{figure}[H]
  \centering
   {\includegraphics[scale=0.09]{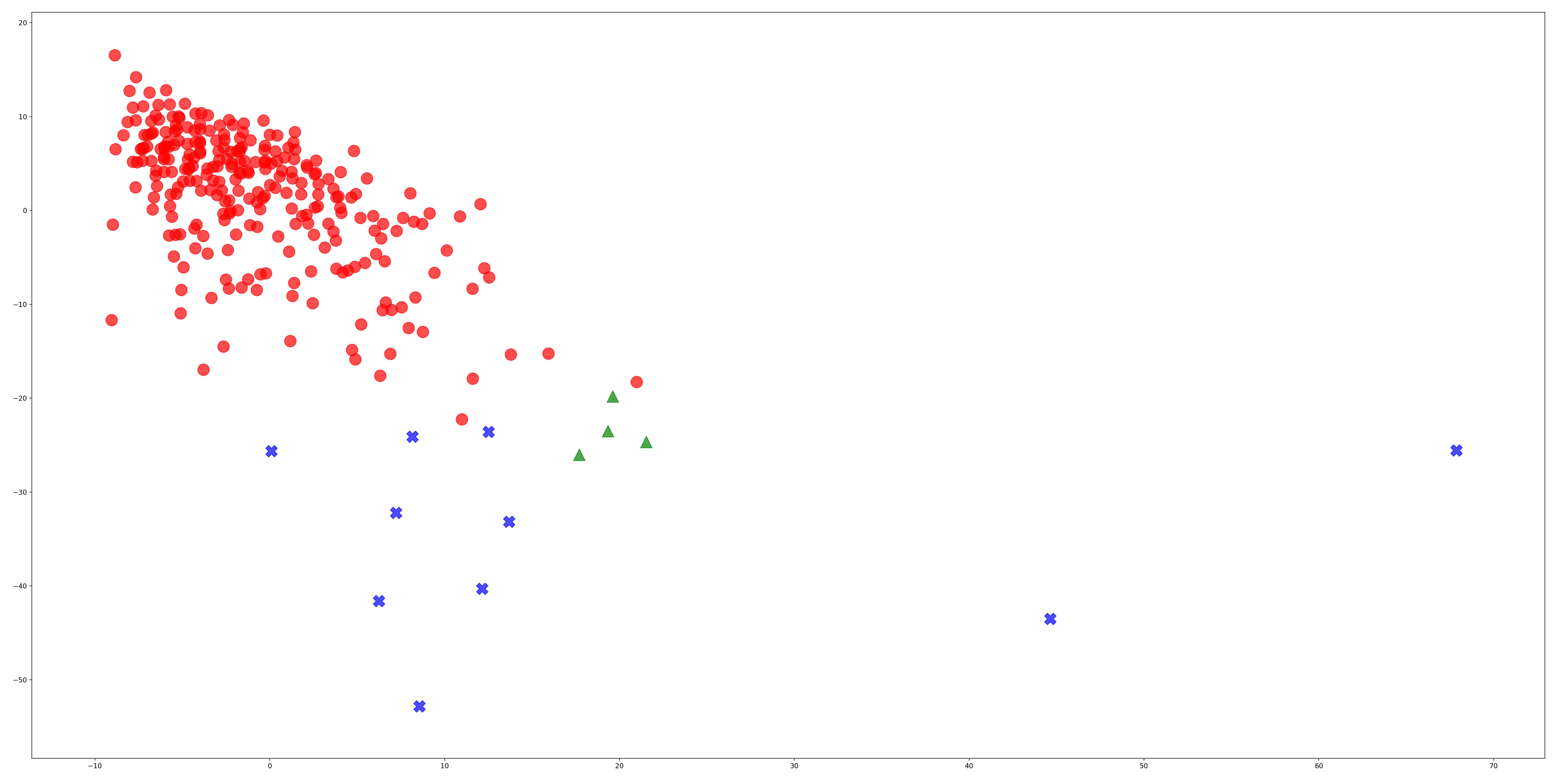}
   \caption{A 2d multidimensional scaling projection for $S$. The points of $S$ are colored according to clustering at $\epsilon=9.5$. At this level we have $12$ clusters, the largest cluster is colored red in the figure, the second largest cluster consists of $4$ elements and is colored green and the singletons are all colored blue.}
   \label{MDS_S}
  }
\end{figure}

Figures \ref{MDS_S} and \ref{MDS_Sg} 
represent the 2d multidimensional scaling projections\cite{kruskal1978multidimensional} 
of the point clouds $S$ and $S_{gl}$, respectively.

\section{Discussion}

In this paper we initiated a mathematical method of representing and analyzing gene segment relationship in a scrambled genome of 
 {\it Oxytricha trifallax} and up to our knowledge, such genome wide study for inter-gene segment arrangement has not been done before. The inter-gene segment arrangements are represented by graphs representing their segment relationship.
 We analyzed the graph data by converting these graphs to a point cloud in a higher dimensional Euclidean space and applied clustering methods from topological data analysis to identify patterns in the graph structures. 
 
The big majority of interactions within a single MIC contig are represented with small graphs up to five vertices (corresponding to the large cluster at $\epsilon=2.5$) and one can `move' from one graph to another by small vertex/edge changes. This suggests that genes with complex interaction patterns are unique and often not found among macronuclear genes.
The majority of the inter-gene segment arrangement within micronuclear chromosomes involve two or three genes and are pairwise interleaving and sometimes overlapping. 

The most prevalent multi-gene segment arrangements in the {\it Oxytricha}'s genome are interleaving and often this appears as one 
gene (or one IES in a MAC contig) interleaving with multiple other MAC contigs (as seen through the `star' like vertices). In the  cluster consisting of four graphs, a single IES of the `star' MAC contig interleaves
with multiple MAC contigs. All star contigs are scrambled, which 
follows the analysis in \cite{g3} where it was observed that contigs whose IESs interleave with other MAC contigs are mostly scrambled. 

The graph representation of the inter-gene segment relationship introduced here is novel, and we hope that a similar approach can be used in studies of the scrambled genomes of other species. 
Comparisons among orthologous genes in other species with scrambled genomes may reveal whether patterns in these graph structures are conserved or embellished over evolutionary time.
Furthermore, studies whether genes with interleaved gene segments are co-expressed may indicate whether the rearrangement of these MAC segments are in parallel or sequential. We suggest that models of gene rearrangement should also focus on operations that can be applied to these frequent interleaving gene segments, which in some cases resemble the odd-even patterns detected within scrambled genes \cite{burns2016recurring}.

The construction of the graph data into a point cloud in this paper is by a vector whose entries are common graph invariant properties, such as the number of vertices,  edges and cliques.
We used two vectors, one that had more local vertex properties and the other in $\mb R^3$ which included only the number of vertices, edges and the maximal clique. It is interesting that in both cases the isolated points are the same, and much distant from the rest of the points. The rearrangement process of the MIC contigs corresponding to these isolated points may indicate specific biological process that include multiple genes simultaneously. The graphs with large cliques (5 and 6) imply that segments of up to 5 or 6 genes all mutually interleave and we suggest further rearrangement gene analysis for these situations. 
In our study we did not consider the length of overlapping segments, nor the number of interleaving gene segments. Further methods that include edge weights may be suitable for  more detailed analysis.

Our representation of graphs relied mainly on representation of a graph via a feature vector in $\mathbb{R}^n$. Similar attempts in this direction have been made for  graph similarities~\cite{gibert2012graph,riesen2010graph}. Their work focus on undirected graphs and do not consider the local properties that we used. 
  There are other venues that rely on developing a similarity measure between graph \cite{bunke2011recent,papadimitriou2010web,hajij2017visual} or graph kernels \cite{gartner2003graph,baur2005network} that we have not explored here. These are methods that often relay on the structural properties of the graph sometimes identified through topological methods and they may also reveal other properties in the genome.  For example, such methods have been successfully applied in protein function prediction~\cite{borgwardt2005protein} and chemical informatics \cite{ralaivola2005graph}. Comparison of such graph analysis methods will be subject of another study.

\begin{comment}
%%% Not clear what was intended: %%%
From our perspective the structure that the neighborhood graph provides is critical for our analysis as global perspective of data is difficult using the method mentioned above. However, exploring data from other perspectives remains of a value of our work and we plan to investigate this further in the future.  
\end{comment}

\bibliography{network_tda_vis}

\newpage
\setcounter{page}{1}
\section*{Supplementary Material}

%Persistent diagram, tree diagram, and MDS for $S$.

%\begin{figure}[H]
%  \centering
%   {\includegraphics[scale=0.04]{MDS_allfeatures_normalized_withoutCC_1.png}
%   \caption{MDS for $S$.}
%   \label{MDS_S}
%  }
%\end{figure}

%MDS_allfeatures_normalzied_withoutCC_1

%Persistent diagram, tree diagram, and MDS for $S_{gl}$.

\begin{figure}[H]
  \centering
   {\includegraphics[scale=0.8]{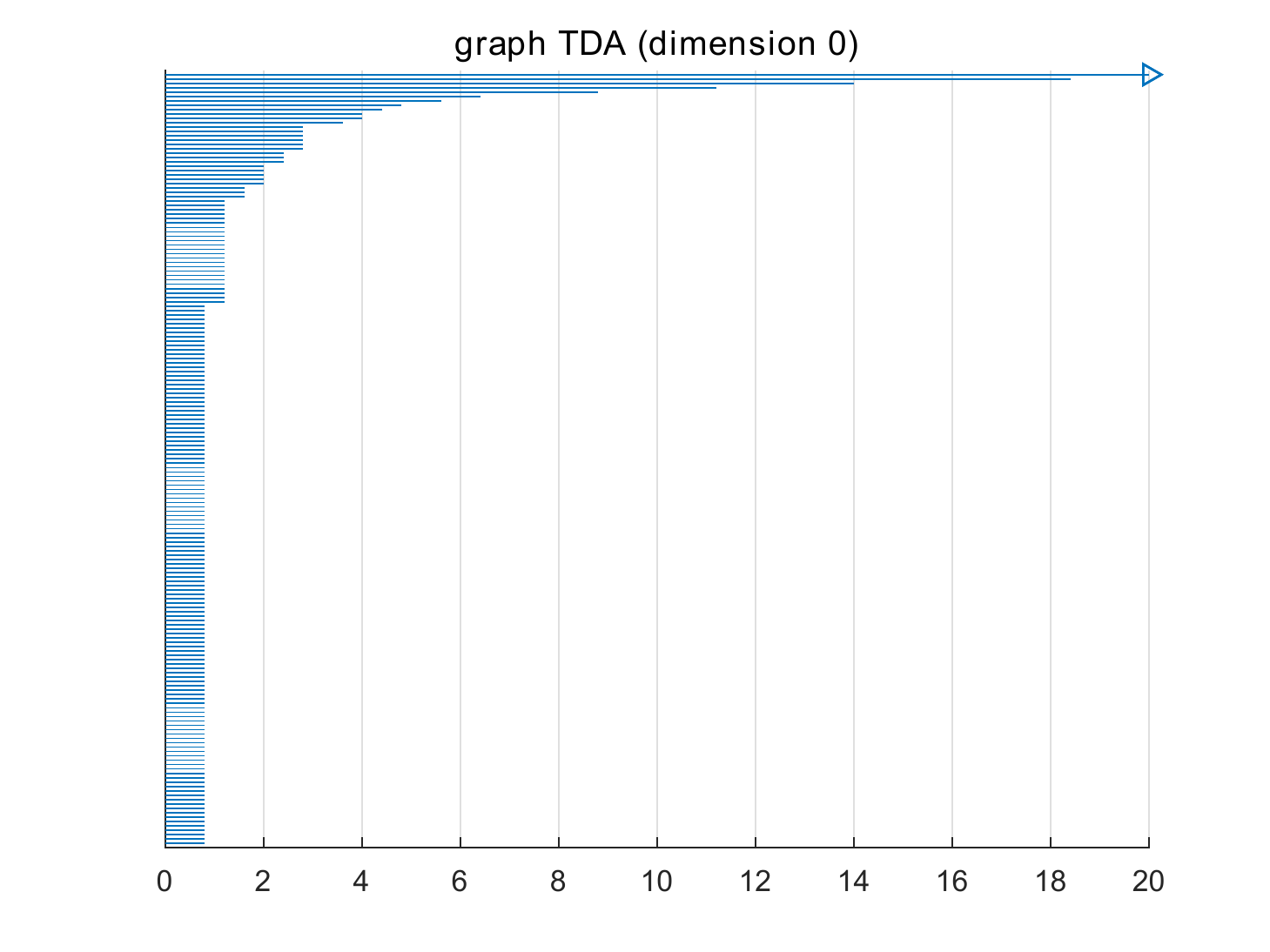}
   \caption{The persistence diagram of the data set $S_{gl}$ obtained from the global features.}
   \label{secondbarcode}
  }
\end{figure}

%Furthermore, the clustering tree is shown in Figure \ref{global f tree}.

\begin{figure}[H]
  \centering
   {\includegraphics[scale=0.039]{tree_plot_res_0_5.pdf}
   \caption{The clustering tree of the data $S_{gl}$ set obtained from the global features.}
   \label{global-f-tree}
  }
\end{figure}

\begin{figure}[H]
  \centering
   {\includegraphics[scale=0.09]{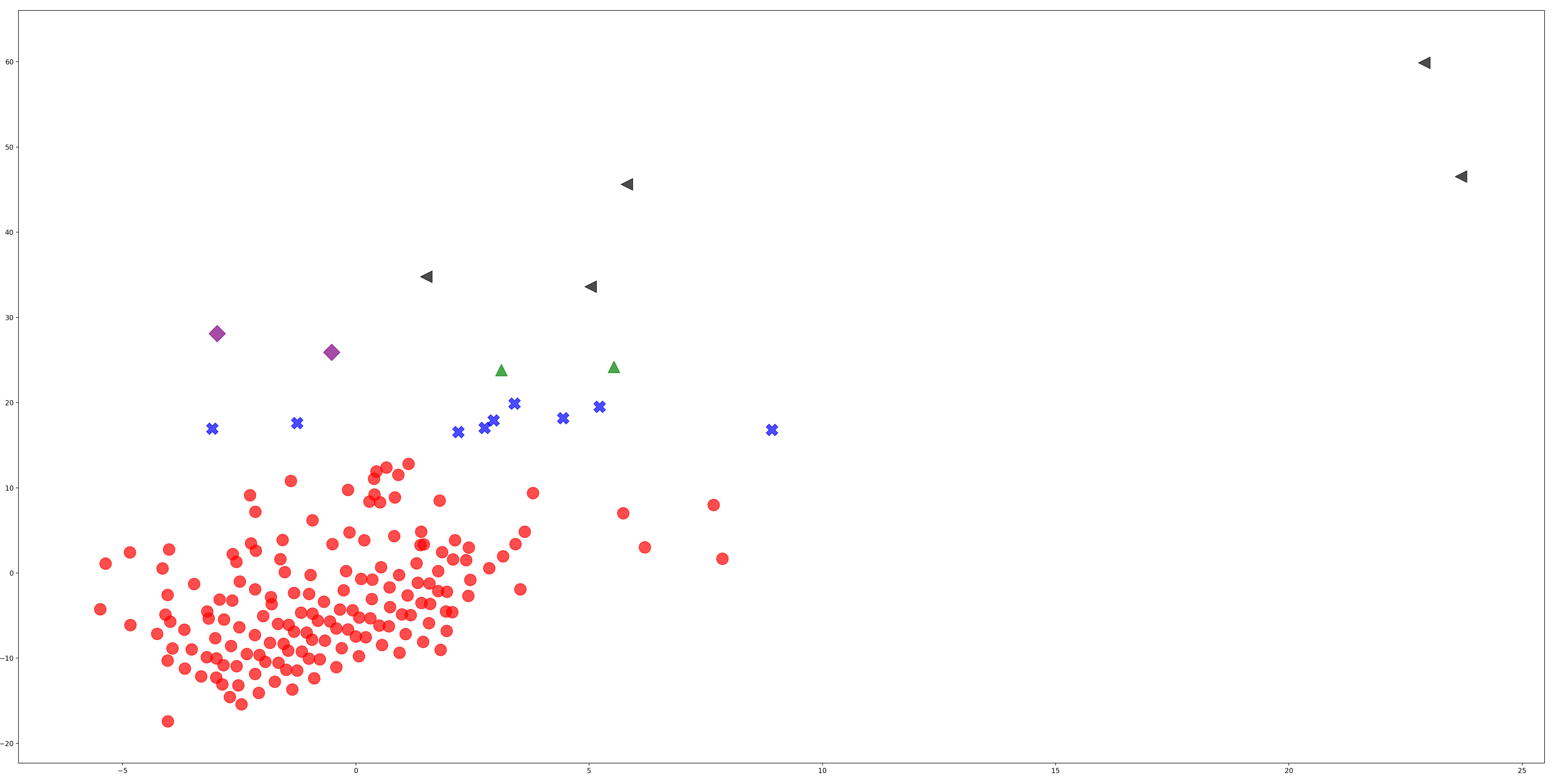}
   \caption{A 2d multidimensional scaling projection for $S_{gl}$. The points of $S_{gl}$ are colored according to clustering at $\epsilon=4.5$. At this level we have $9$ clusters, the largest cluster is colored red, the second two largest clusters consists of $2$ elements and they are colored magenta and green, and the singletons are all colored black.}
   \label{MDS_Sg}
  }
\end{figure}

The  graphs in the cluster of $S$  at $\epsilon=9.5$ consisting of four elements depicted in 
Figures~\ref{ctg7180000088928},\ref{ctg7180000088096},\ref{ctg7180000067742} and~\ref{ctg7180000067187}.

\begin{figure}[h]
\begin{minipage}[t]{.32\textwidth}
    \includegraphics[width=\textwidth]{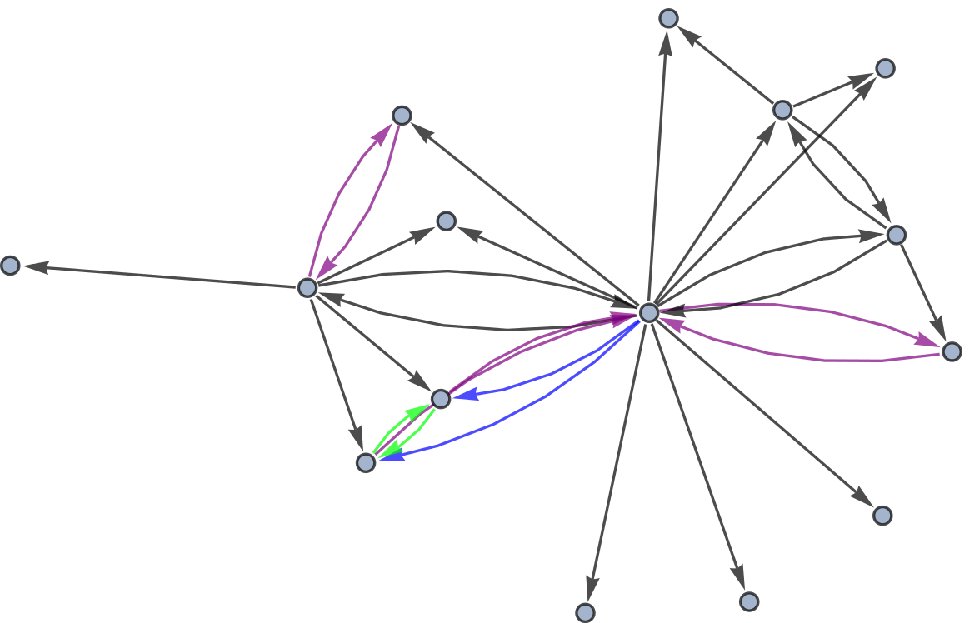}
    \caption{ctg7180000088928}
    \label{ctg7180000088928}
  \end{minipage}
   \hspace{20mm}
  \begin{minipage}[t]{.32\textwidth}
    \includegraphics[width=\textwidth]{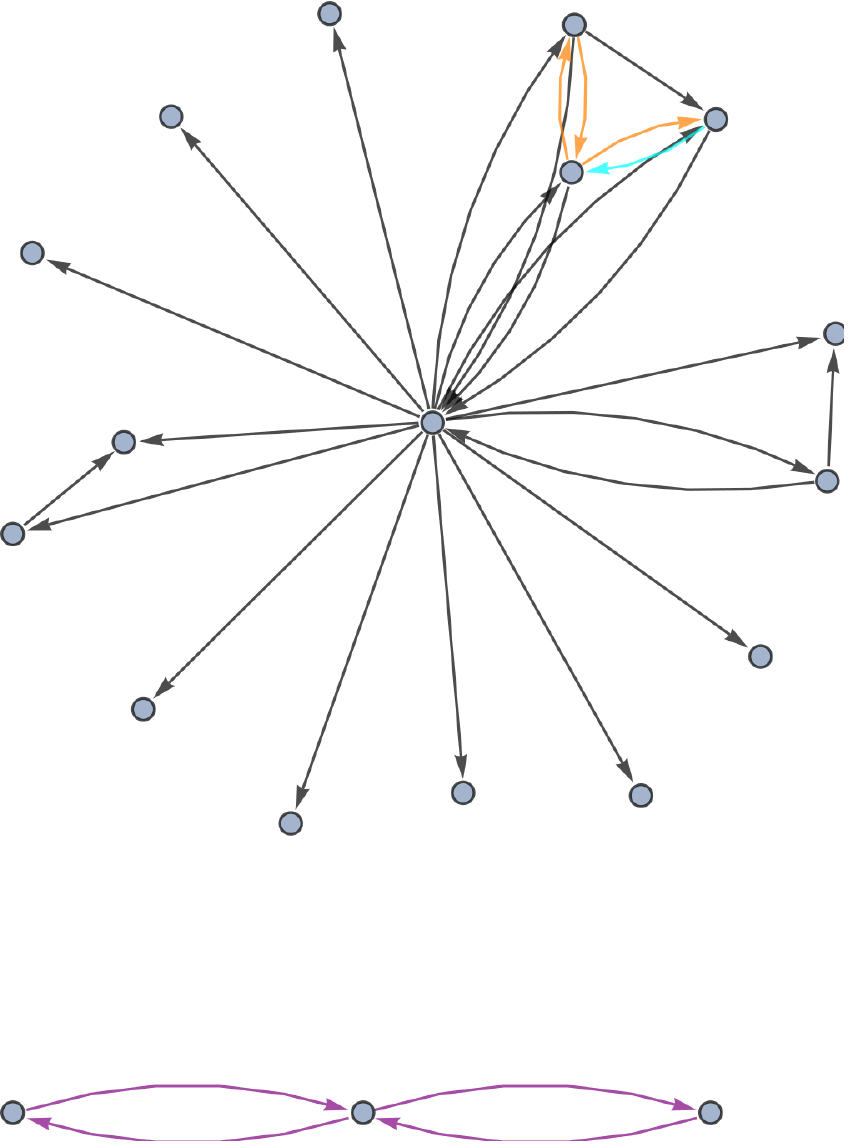}
    \caption{ctg7180000088096}
    \label{ctg7180000088096}
  \end{minipage}
\end{figure}  

\begin{figure}[h]
  \begin{minipage}[t]{.32\textwidth}
    \includegraphics[width=\textwidth]{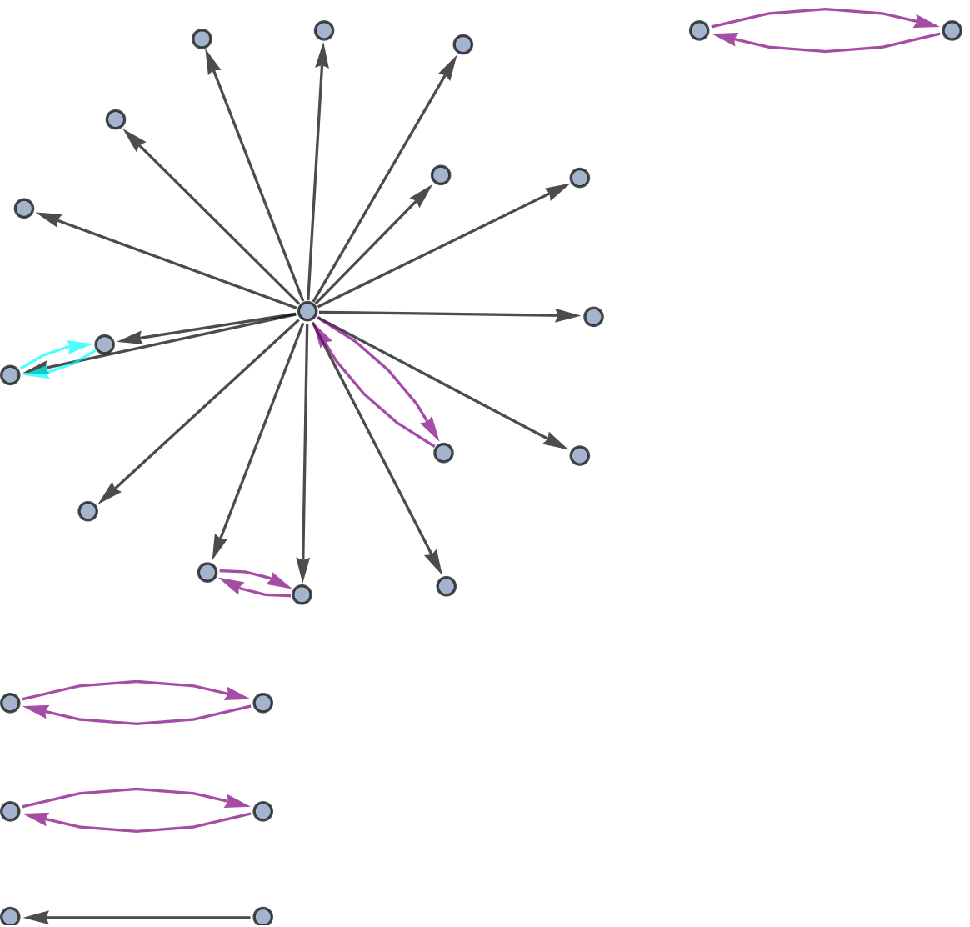}
    \caption{ctg7180000067742}
     \label{ctg7180000067742}
  \end{minipage}
  \hspace{20mm}
  \begin{minipage}[t]{.32\textwidth}
    \includegraphics[width=\textwidth]{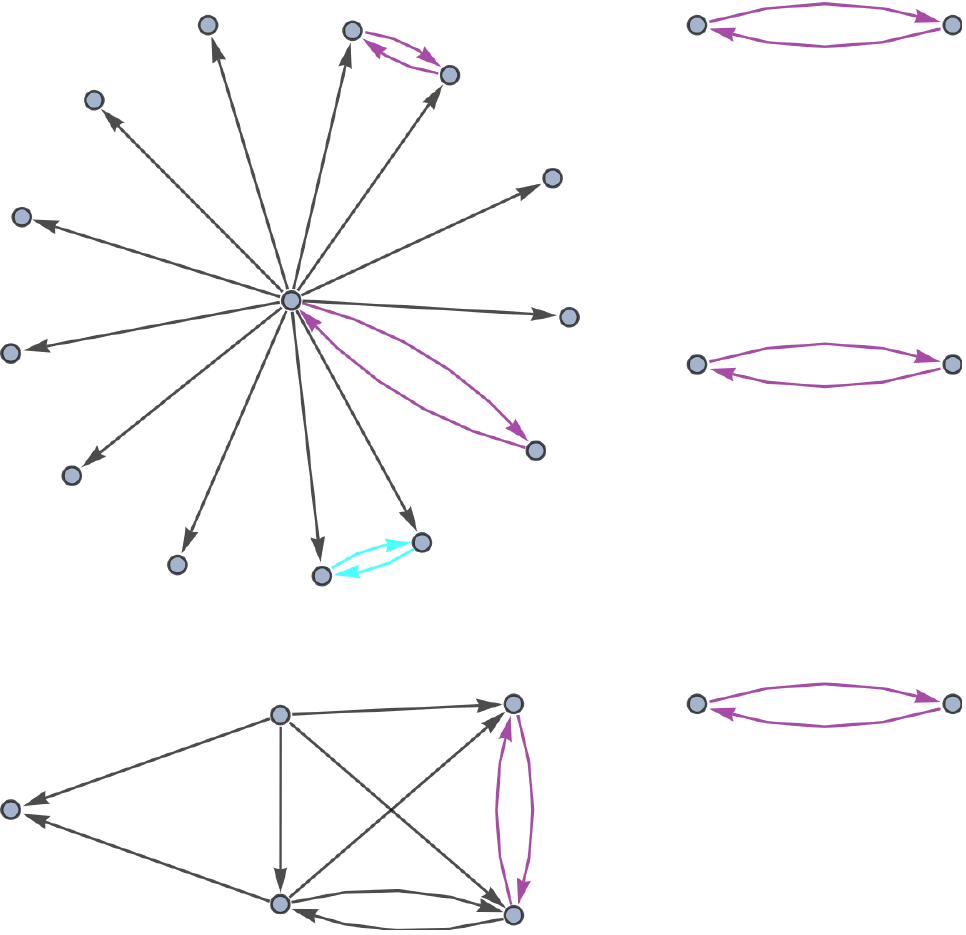}
    \caption{ctg7180000067187}
    \label{ctg7180000067187}
  \end{minipage}
\end{figure}

\pagebreak

The graphs that form singleton clusters (isolated points) in $S$ ($\epsilon=9.5$) are
depicted in the remaining figures. 

\begin{figure}[H]
  \begin{minipage}[t]{.35\textwidth}
    \includegraphics[width=\textwidth]{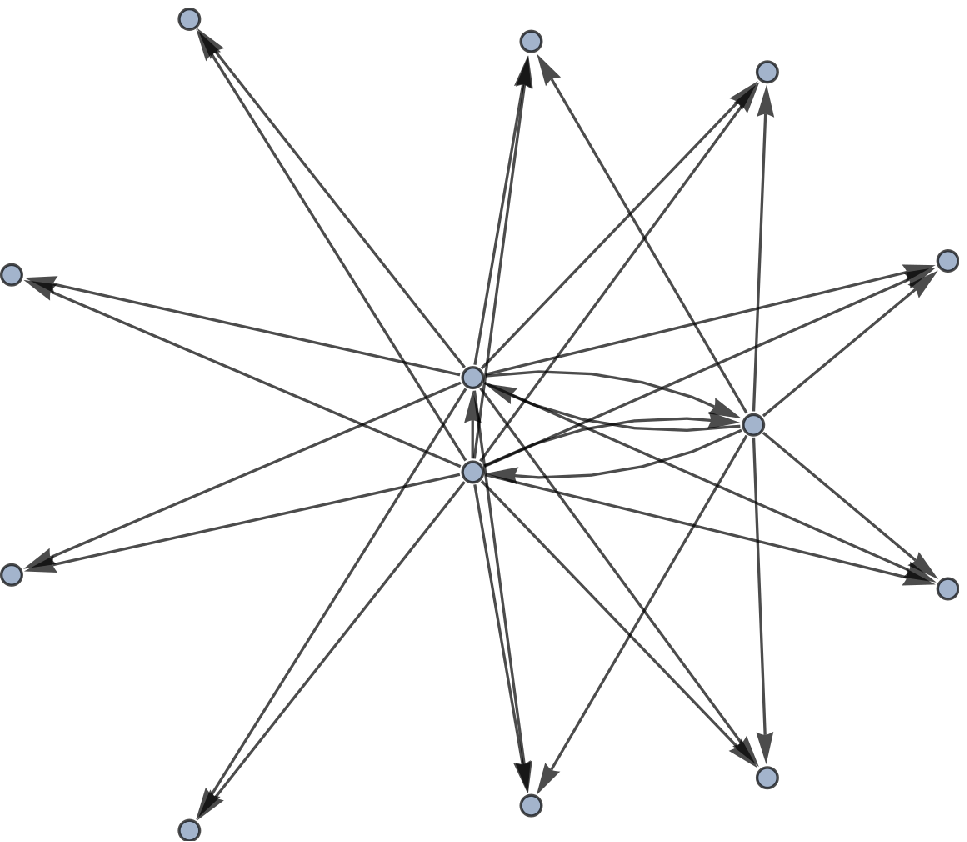}
    \caption{ctg7180000067761}
    %joins the large component at $d=11$ }
     \label{ctg7180000067761}
  \end{minipage}
  \hspace{20mm}
  \begin{minipage}[t]{.35\textwidth}
    \includegraphics[width=\textwidth]{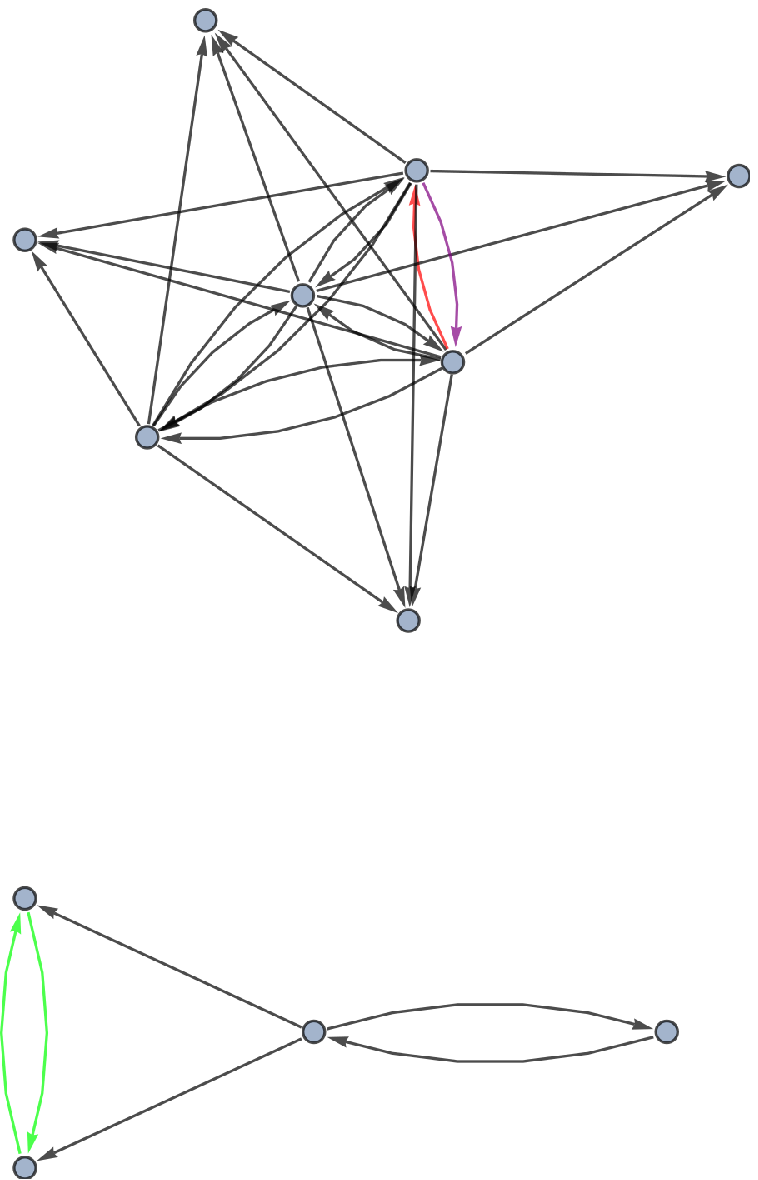}
    \caption{ctg7180000087162}
    %joins the large component at $d=12$}
    \label{ctg7180000087162}
  \end{minipage} 
\end{figure}

\begin{figure}[H]
 \begin{minipage}[t]{.32\textwidth}
    \includegraphics[width=\textwidth]{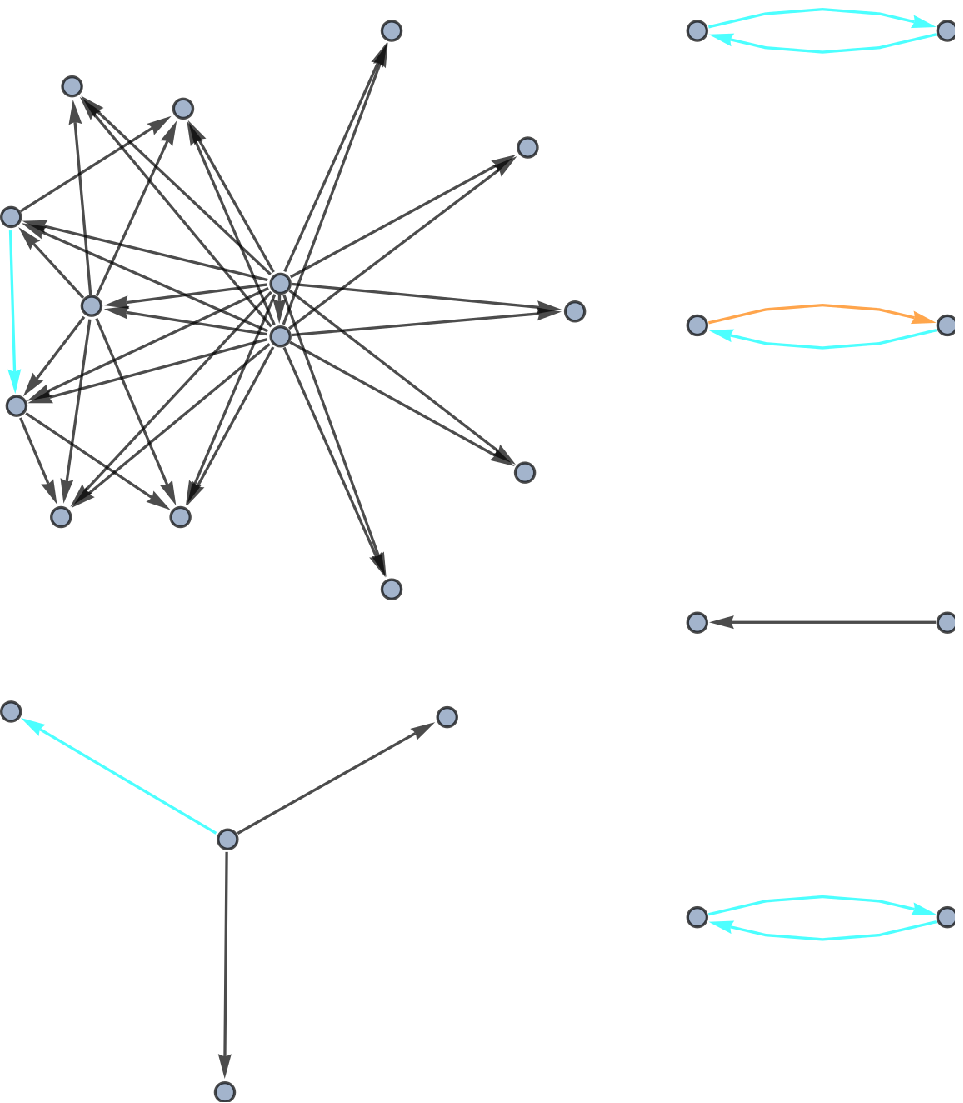}
    \caption{ctg7180000087484} 
    % joins the large component at $d=13$}
    \label{ctg7180000087484}
  \end{minipage}
 \hspace{20mm}
  \begin{minipage}[t]{.32\textwidth}
    \includegraphics[width=\textwidth]{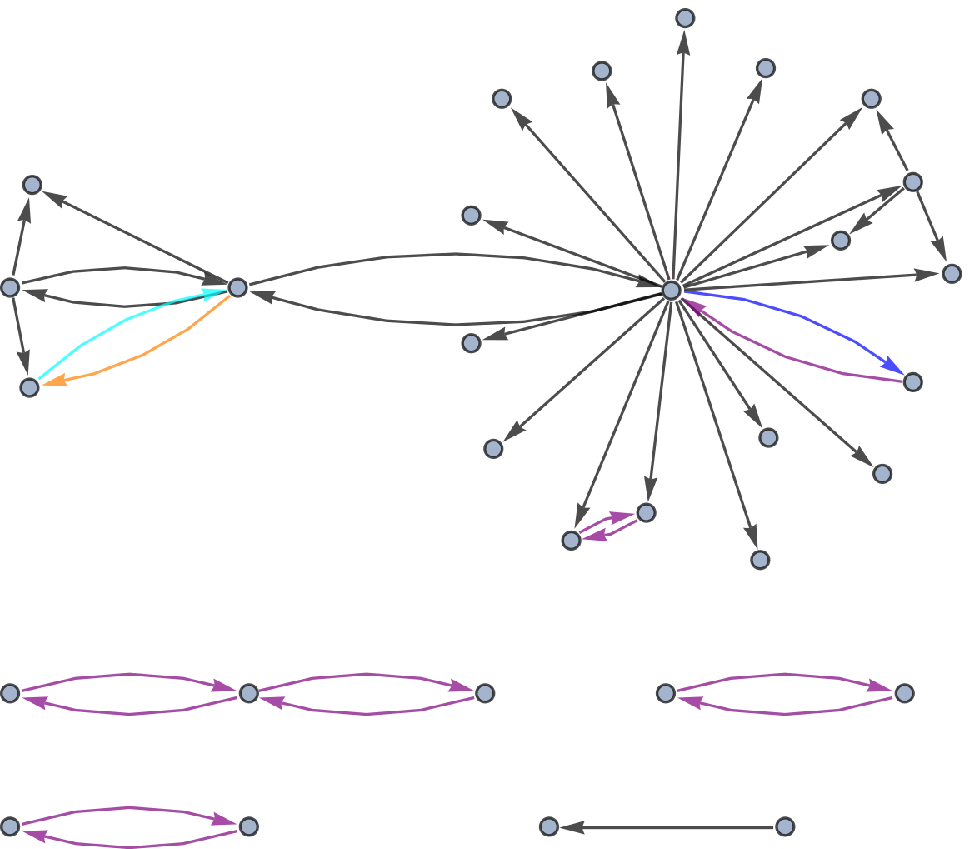}
    \caption{ctg7180000067363} % joins the large component at $d=14$}
     \label{ctg7180000067363}
  \end{minipage}
\end{figure}

%\newpage

\begin{figure}[H]  
  \begin{minipage}[t]{.32\textwidth}
    \includegraphics[width=\textwidth]{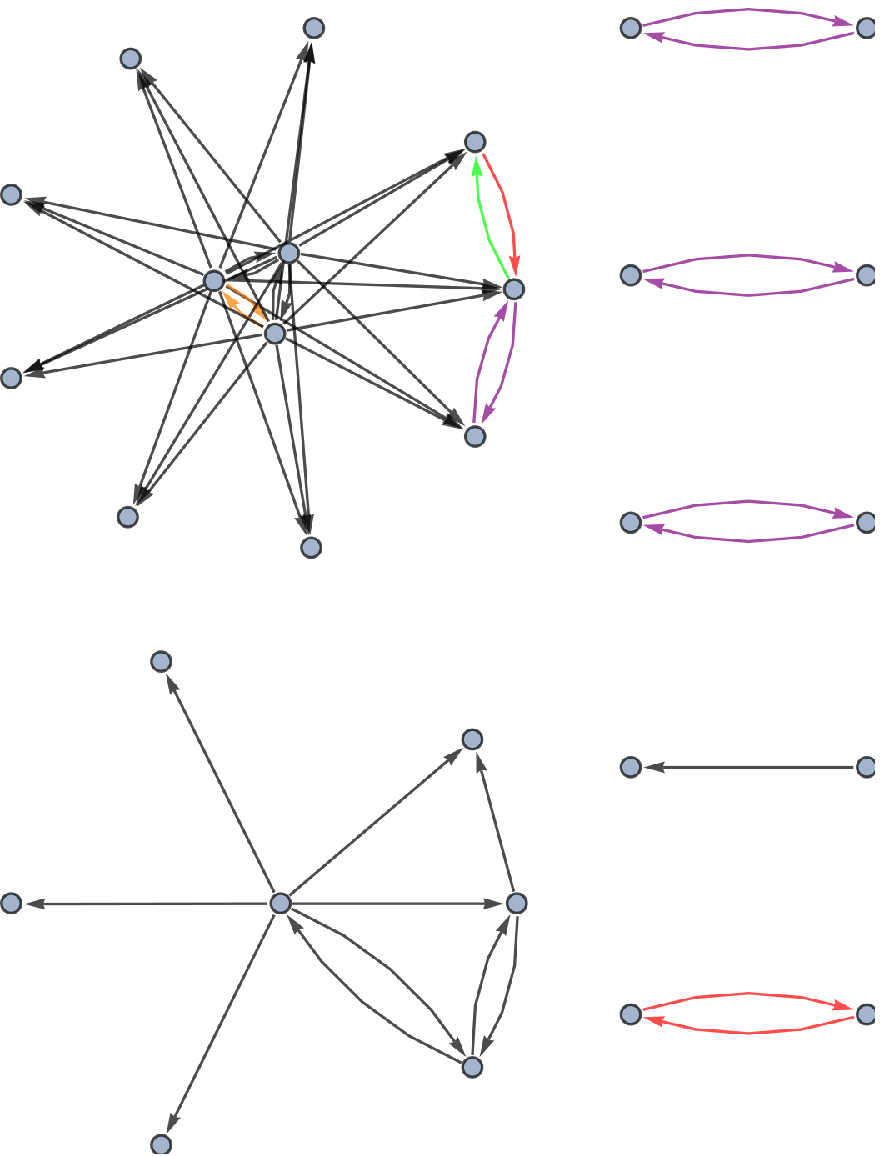}
    \caption{ctg7180000067280} % joins the component at $d=15$ }
    \label{ctg7180000067280}
  \end{minipage}
  \hspace{20mm}
   \begin{minipage}[t]{.32\textwidth}
    \includegraphics[width=\textwidth]{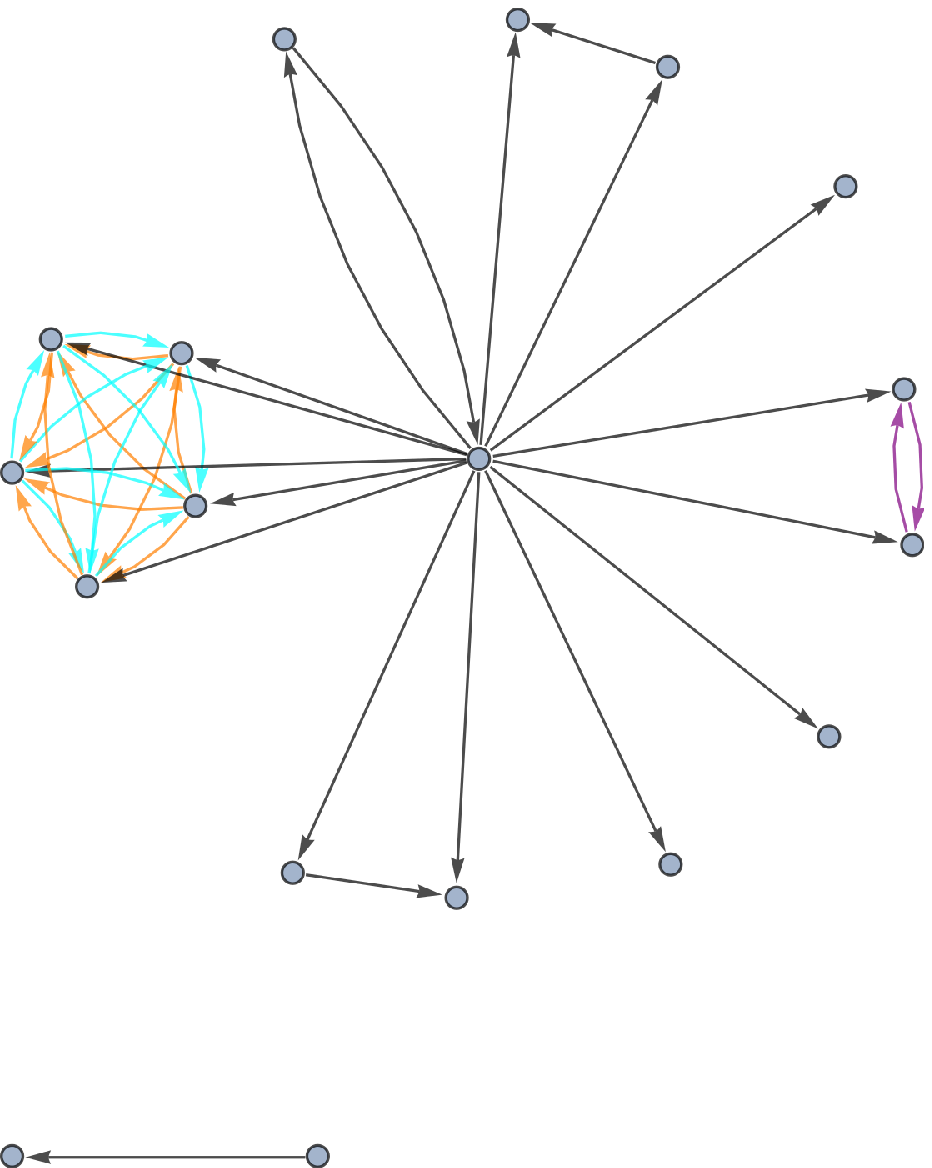}
    \caption{ctg7180000067243} % joins the large component at $d=15$}
     \label{ctg7180000067243}
   \end{minipage}    
     \end{figure}

\begin{figure}[H]
  \begin{minipage}[t]{.32\textwidth}
    \includegraphics[width=\textwidth]{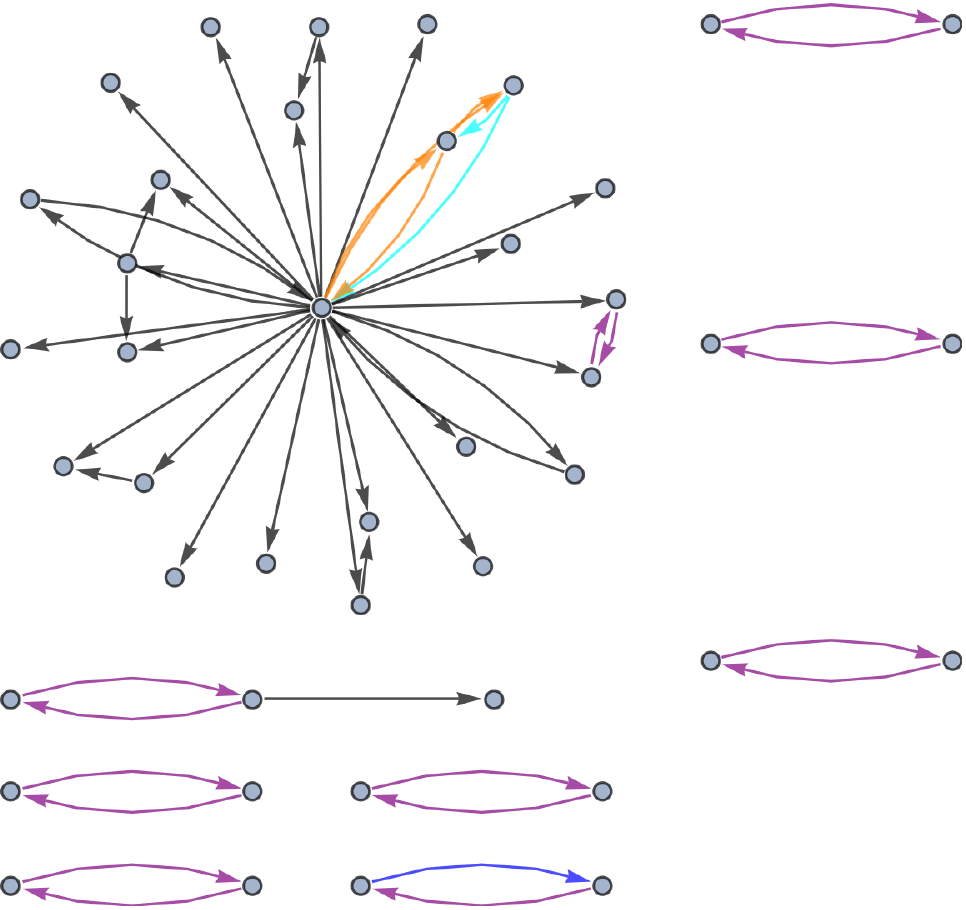}
    \caption{ctg7180000067157} % joins the large component at $d=23$}
    \label{ctg7180000067157}
  \end{minipage}
   \hspace{20mm}
   \begin{minipage}[t]{.32\textwidth}
    \includegraphics[width=\textwidth]{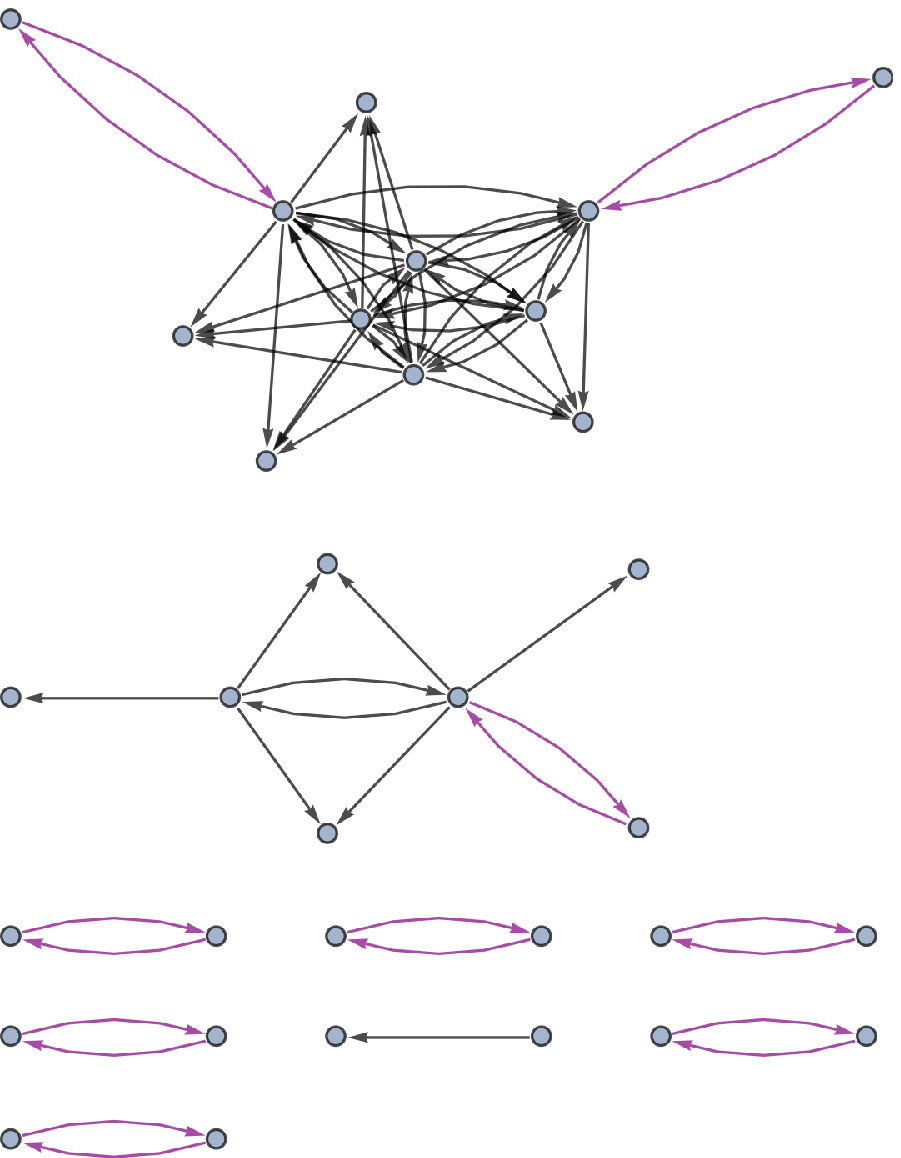}
    \caption{ctg7180000067223} % joins the large comonent last at $d=24$}
    \label{ctg7180000067223}
  \end{minipage}
\end{figure}

\begin{figure}[H]
  \begin{minipage}[t]{.32\textwidth}
    \includegraphics[width=\textwidth]{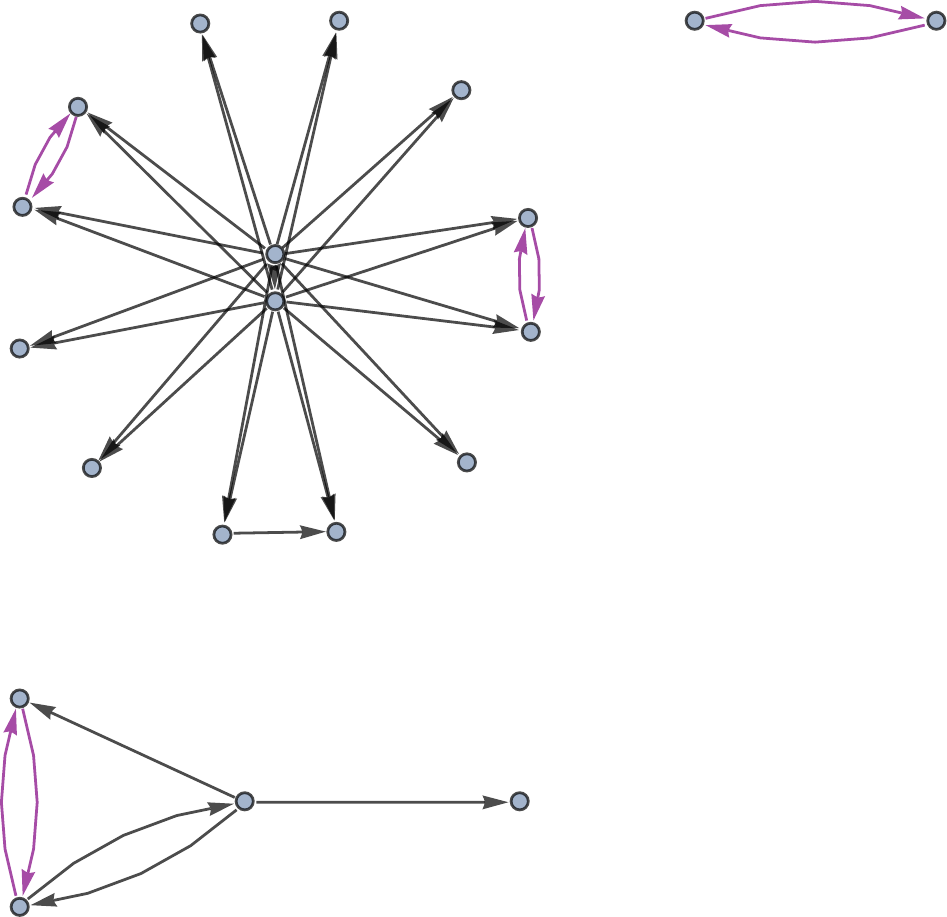}
    \caption{ctg7180000067417} 
    \label{ctg7180000067417}
  \end{minipage}
   \hspace{20mm}
   \begin{minipage}[t]{.32\textwidth}
    \includegraphics[width=\textwidth]{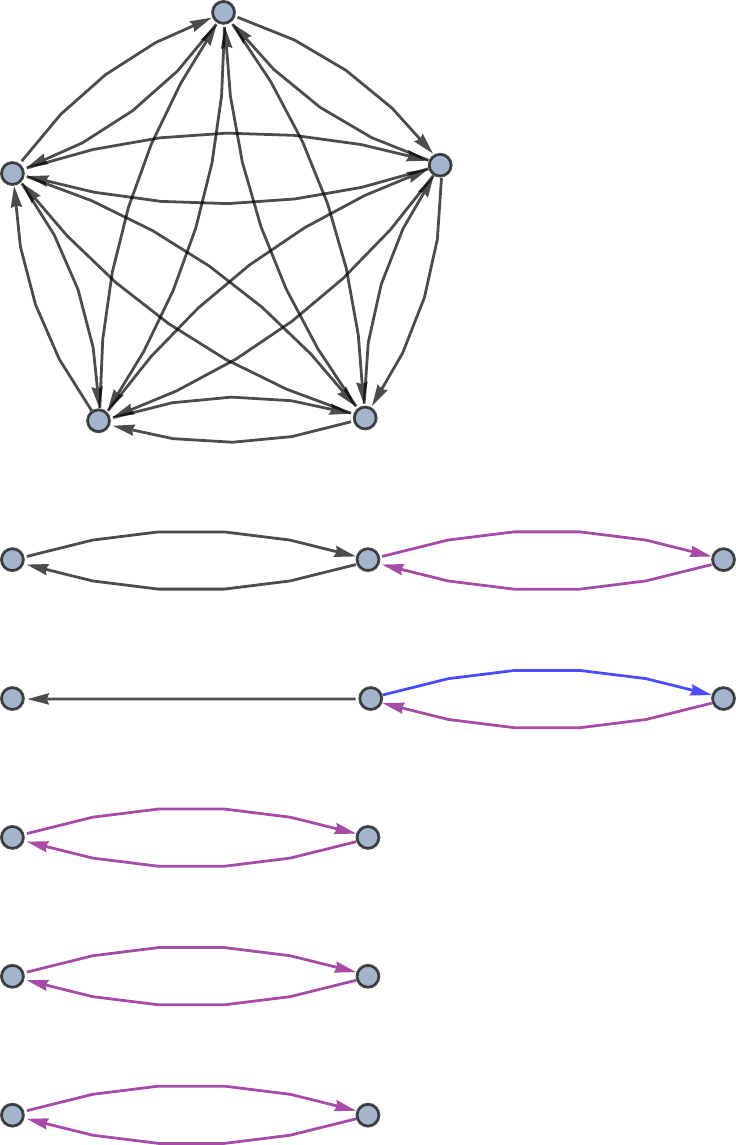}
    \caption{ctg7180000067411} 
    \label{ctg7180000067411}
  \end{minipage}
\end{figure}

\begin{comment}

\begin{figure}[H]
  \begin{minipage}[t]{.22\textwidth}
    \includegraphics[width=\textwidth]{ctg7180000067417}
    \caption{ctg7180000067417} 
    \label{ctg7180000067417}
  \end{minipage}
   \hspace{20mm}
   \begin{minipage}[t]{.22\textwidth}https://preview.overleaf.com/public/sdbcrsbbwtfr/images/67bd44b1aeccfeb46ba958cc6a7b2d7c3bc53d0a.jpeg
    \includegraphics[width=\textwidth]{ctg7180000067411}
    \caption{ctg7180000067411} 
    \label{ctg7180000067411}
  \end{minipage}
\end{figure}

\end{comment}

\end{document}